\DeclareMathOperator*{\argmin}{arg\,min}
\newtheorem{theorem}{Theorem}[section]
\newtheorem{lemma}{Lemma}[section]
\newtheorem{algorithm}{Algorithm}
\newtheorem{assumption}{Assumption}
\newtheorem{definition}{Definition}[section]
\theoremstyle{remark}
\newtheorem{remark}{Remark}[section]
\begin{document}
\title{A Compressed Sensing Based Least Squares Approach to Semi-supervised Local Cluster Extraction} 
\author{Ming-Jun Lai 
\footnote{Department of Mathematics,
University of Georgia, Athens, GA 30602. mjlai@uga.edu. } 
\and 
Zhaiming Shen\footnote{Department of Mathematics,
University of Georgia, Athens, GA 30602. zhaiming.shen@uga.edu.}}
\maketitle
\begin{abstract} 
\noindent
A least squares semi-supervised local clustering algorithm based on the idea of compressed sensing is proposed to extract clusters from a 
graph with known adjacency matrix. The algorithm is based on a two-stage approach similar to the one in \cite{LaiMckenzie2020}.
However, under a weaker assumption and with less computational complexity 
than the one in \cite{LaiMckenzie2020}, 
the algorithm is shown to be able to find a desired cluster with high probability. The ``one cluster at a time" feature of our method distinguishes it from other global clustering methods.
Several numerical experiments are conducted on the synthetic data such as stochastic block model and real data such as MNIST, political blogs network, AT\&T and YaleB human faces data sets to demonstrate the effectiveness and efficiency of our algorithm. 
\end{abstract} 

\section{Introduction}
\label{intro}
Informally speaking, graph clustering aims at dividing the set of vertices from a graph into subsets in a way such that there are more edges within each subset, and fewer edges between different subsets.  When analyzing a graph, one of people's primary interest is to find the underlying clustered structure of the graph, as the vertices in the same cluster can reasonably be assumed to have 
some latent similarity. For data sets which are not presented as graphs, we can create a suitable auxiliary graph such as the $K$-nearest-neighbors ($K$-NN) graph based on the given data, for example, see \cite{Shi2000}, \cite{Mahoney2012} and \cite{Jacobs2018}. Then we can apply graph clustering techiques on this auxiliary graph. 

Graph clustering problem has become prevalent recently in areas of social network study \cite{Fortunato2010}, \cite{Hric2014} and \cite{Kossinets2016}, image classification  \cite{Camps2007}, \cite{Chen2005} and \cite{Shi2000}, natural language processing \cite{Dhillon2001} and \cite{Mihalcea2011}. For example, suppose a social network graph has vertices which represent users of a social network (e.g. Facebook, Linkedln), then the edges could represent users which are connected to each other. The sets of nodes with high inter-connectivity, which we call them communities or clusters, could represent friendship groups or co-workers. By identifying those communities we can suggest new connections to users. Note that some networks are directed (e.g. Twitter, Citation Networks), which could make community detection more subtle. For the scope of this paper, we will only focus on weighted undirected graphs.

The classical graph based clustering problem is a global clustering problem which assigns every vertice a unique cluster, assuming there are no multi-class vertices. It is usually considered as an unsupervised learning problem which can be done by using method such as spectral clustering \cite{Luxburg2007}, \cite{Ng2002} and \cite{Zelnik2004} or ways of finding an optimal cut of the graph \cite{Dhillon2004}, \cite{Ding2001}, these approaches are generally computational expensive and hard to implement for large data sets. It can also be done semi-supervisely, such as \cite{Kulis2005}, \cite{Jacobs2018} and \cite{Yin2018}. However, sometimes it is only of people's interests in finding one certain cluster which contains the target vertices, given some prior knowledge of a small portion of labels for the entire true cluster, which is usually attainable for real data. This type of problem is called local clustering, or local cluster extraction, which loosely speaking, is defined to be the problem which takes a set of vertices $\Gamma$ with given labels, we call them seeds, as input, and returns a cluster $C^{\#}$ such that $\Gamma\subset C^{\#}$. 
The local clustering problem hasn't been studied exhaustively, and many aspects of the local clustering problem still remain 
open. Some recent related work are \cite{Ha2020}, \cite{Yan2019}, \cite{Yin2017}, \cite{Veldt2019} and \cite{LaiMckenzie2020}. 
Especially the work in \cite{LaiMckenzie2020} is one of the recent works with the same problem setting as in this paper. 
More precisely, we propose a new semi-supervised local clustering approach using the ideas of compressed sensing and method of least squares to make the clustering effective and efficient. Indeed, as we will see in the numerical experiments section, our approach outperforms the work in \cite{LaiMckenzie2020} in terms of both the accuracy and 
efficiency. 

The main contribution of this paper is that it proposes the local cluster extraction Algorithms \ref{alg3} and \ref{alg4}  which improve the
performance of the algorithms in \cite{LaiMckenzie2020} and also slightly improve state-of-the-art result in \cite{Abbe2015} for the political blog network \cite{AG05}. It also achieves better or comparable results on synthetic stochastic block model, human faces data, and MNIST data compared with several other modern local clustering algorithms or semi-supervised algorithms.

The subsequent sections in this paper are structured as follows. In Section \ref{sectionprelim}, we give brief introductions to spectral clustering and concept of graph Laplacian, we also make the assumptions for the graph model which we will use later for theoretical analysis. In Section \ref{sectionalgorithms}, we explain the main algorithms for solving the local cluster extraction problem in two-stage and show the correctness of our algorithms asymptotically. In Section \ref{sectioncomplexity}, we analyze the computational complexity of our algorithms. In Section \ref{sectionnumerical}, several synthetic and real data sets are used to evaluate the performance of our algorithms and we also compared their performances with the state-of-the-art results.

\section{Preliminaries and Models}
\label{sectionprelim}
\subsection{Notations and Definitions}
We use standard notation $G=(V,E)$ to denote the graph $G$ with the set of vertices $V$ and set of edges $E$. For the case $|V|=n$, we identify $V$ with the set of integers $[n]:=\{1,2,\cdots,n\}$. We use $A$ to denote the adjacency matrix (possibly non-negative weighted) of $G$, so in the undirected case, $A$ is a symmetric matrix. Let $D$ be the diagonal matrix $D=diag(d_1,d_2,\cdots,d_n)$, where each $d_i$ is the degree of vertex $i$. We have the following definition.
\begin{definition}
	The \emph{unnormalized graph Laplacian} is defined as $L=D-A$. There are also two other \emph{normalized graph Laplacians} which are
	symmetric graph Laplacian $L_{sym}: =I-D^{-1/2}AD^{-1/2}$, and the random walk graph Laplacian $L_{rw}: = I-D^{-1}A$.
\end{definition} 
The following result serves as the foundation of our approach for solving the graph clustering problem, we omit its proof here by directly referring to \cite{Chung1997} and \cite{Luxburg2007}.
\begin{lemma} \label{kernelthm}
	Let G be an undirected graph of size $n$ with non-negative weights. Then the multiplicity $k$ of the eigenvalue $0$ of $L$ $ (L_{rw})$ equals to the number of connected components  $C_1, C_2, \cdots, C_k$ in $G$, and the indicator vectors $\textbf{1}_{C_1}, \cdots, \textbf{1}_{C_k}\in\mathbb{R}^n$ on these components span the kernel of $L$ $(L_{rw})$.
\end{lemma}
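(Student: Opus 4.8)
The plan is to treat the unnormalized Laplacian $L=D-A$ directly and then reduce the random walk case $L_{rw}=I-D^{-1}A$ to it. The central tool is the quadratic form identity: for every $f\in\mathbb{R}^n$,
\[
f^\top L f \;=\; \frac{1}{2}\sum_{i,j=1}^n A_{ij}\,(f_i-f_j)^2 ,
\]
which follows by expanding $f^\top L f = f^\top D f - f^\top A f = \sum_i d_i f_i^2 - \sum_{i,j} A_{ij} f_i f_j$ and using $d_i=\sum_j A_{ij}$. Since the graph has non-negative weights, every summand on the right is non-negative, so $L$ is positive semi-definite; hence $0$ is its smallest eigenvalue, and since a symmetric positive semi-definite matrix satisfies $f^\top Lf=0\iff Lf=0$, the eigenspace of the eigenvalue $0$ is exactly the null space $\ker L$.

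The next step is to identify $\ker L$ with $\mathrm{span}\{\mathbf{1}_{C_1},\dots,\mathbf{1}_{C_k}\}$. If $f^\top L f = 0$, the identity forces $f_i=f_j$ whenever $A_{ij}>0$, i.e. across every edge. A short induction along paths then shows $f$ is constant on each connected component: for $i,j$ in the same component $C_\ell$ there is a path $i=v_0,v_1,\dots,v_m=j$ with consecutive vertices adjacent, so $f_i=f_{v_1}=\dots=f_j$. Conversely, any $f$ constant on each $C_\ell$ makes every term $A_{ij}(f_i-f_j)^2$ vanish. Thus $\ker L=\mathrm{span}\{\mathbf{1}_{C_1},\dots,\mathbf{1}_{C_k}\}$, and because the $C_\ell$ are disjoint and nonempty these indicators are linearly independent, so $\dim\ker L=k$; since $L$ is symmetric, the algebraic multiplicity of the eigenvalue $0$ equals this geometric multiplicity.

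For $L_{rw}$, assume each vertex has positive degree so that $D$ is invertible (an isolated vertex is its own component and needs separate, trivial handling). Writing $L_{rw}=D^{-1}L$ gives $L_{rw}f=0\iff Lf=0$, so $\ker L_{rw}=\ker L=\mathrm{span}\{\mathbf{1}_{C_1},\dots,\mathbf{1}_{C_k}\}$. Moreover $L_{rw}$ is similar to the symmetric positive semi-definite matrix $L_{sym}=D^{-1/2}LD^{-1/2}$, via $L_{sym}=D^{1/2}L_{rw}D^{-1/2}$; similar matrices share eigenvalues with the same algebraic multiplicities, and for the symmetric $L_{sym}$ algebraic and geometric multiplicities agree, so the multiplicity of $0$ for $L_{rw}$ equals $\dim\ker L_{rw}=k$. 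This settles both cases.

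I do not anticipate a genuine obstacle, as the statement is classical; the only step requiring real care is the implication "quadratic form vanishes $\Rightarrow f$ constant on components," which genuinely uses path-connectivity of a component rather than merely the edge relation, together with the bookkeeping needed to keep the three normalizations $L$, $L_{sym}$, $L_{rw}$ consistent and to record where invertibility of $D$ is used.
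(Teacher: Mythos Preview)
Your argument is correct and is essentially the standard proof of this classical result (as found, e.g., in von Luxburg's tutorial). The paper itself does not give a proof at all: it explicitly states ``we omit its proof here by directly referring to \cite{Chung1997} and \cite{Luxburg2007},'' so there is no in-paper argument to compare against.
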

Let us introduce some more notations which we will use later. Suppose for the moment we have information about structure of the underlying clusters for each vertex, then it is useful to write $G$ as a union of two edge-disjoint subgraphs $G=G^{in}\cup G^{out}$ where $G^{in}=(V,E^{in})$ consists of only intra-connection edges, and $G^{out}=(V,E^{out})$ consists of only inter-connection edges. We will use $d_i^{in}$ to denote the degree of vertex $i$ in the subgraph $G^{in}$, and $d_i^{out}$ to denote the degree of vertex $i$ in the subgraph $G^{out}$. We will also use $A^{in}$ and $L^{in}$ to denote the adjacency matrix and graph Laplacian associated with $G^{in}$, and $A^{out}$ and $L^{out}$ to denote the adjacency matrix and graph Laplacian associated with $G^{out}$. Note that these notations are just for convenience for the analysis in the next section, in reality we will have no assurance about which cluster each individual vertex belongs to, so we will have no access to $A^{in}$ and $L^{in}$. It is also worthwhile to point out that $A=A^{in}+A^{out}$ but $L\neq L^{in}+L^{out}$ in general. Furthermore, we will use $|L|$ or $|\mathbf{y}|$ to denote the matrix or vector where each its entry is replaced by the absolute value, and we will use $|V|$ to denote the size of $V$ whenever $V$ is a set. 
In the later sections, we will use $L$ and $L^{in}$ to indicate $L_{rw}$ and $L^{in}_{rw}$ respectively, and use $L_{C}$ and $L^{in}_C$ to denote the submatrices of $L$ and $L^{in}$ with column indices subset $C\subset V=[n]$ respectively.
For convenience, let us formulate the notations being used through the paper into Table \ref{Notation}. 

\begin{table}[h]
    \centering
    \caption{Table of Notations}
    \label{Notation}
    \begin{tabular}{lll}
        \hline
         Symbols & \quad\quad &  \\
         \hline
         $G$ & \quad\quad & A general graph \\
         $|G|$ & \quad\quad & Size of G \\
        $V$ & \quad\quad & Set of vertices of graph G \\
        & \quad\quad & (We identify $V=\{1,2,\cdots,n\}$ if $|G|=n$ through the paper) \\
        $|V|$ & \quad\quad & Size of $V$ \\
        $E$ & \quad\quad & Set of edges of graph G \\
        $E^{in}$ & \quad\quad & Subset of $E$ which consists only intra-connection edges \\
        $E^{out}$ & \quad\quad & Subset of $E$ which consists only inter-connection edges \\
        $G^{in}$ & \quad\quad & Subgraph of $G$ on $V$ with edge set $E^{in}$ \\
        $G^{out}$ & \quad\quad & Subgraph of $G$ on $V$ with edge set $E^{out}$ \\
        $A$ & \quad\quad & Adjacency matrix of graph $G$ \\
        $A^{in}$ & \quad\quad & Adjacency matrix of graph $G^{in}$ \\
        $A^{out}$ & \quad\quad & Adjacency matrix of graph $G^{out}$ \\
        $L$ & \quad\quad & Random walk graph Laplacian of $G$ \\
        $L^{in}$ & \quad\quad & Random walk graph Laplacian of $G^{in}$ \\
        $L^{out}$ & \quad\quad & Random walk graph Laplacian of $G^{out}$  \\
        $\mathbf{1}_{C}$ & \quad\quad & Indicator vector on subset $C\subset V$ \\
        $L_{C}$ & \quad\quad & submatrix of $L$ with column indices $C\subset V$ \\
        $L^{in}_{C}$ & \quad\quad & submatrix of $L^{in}$ with column indices $C\subset V$ \\
        $|L|$ & \quad\quad & Entrywised absolute value operation on matrix $L$ \\ 
        $|\mathbf{y}|$ & \quad\quad & Entrywised absolute value operation on vector $\mathbf{y}$ \\
        $\|M\|$ & \quad\quad & $\|\cdot\|_2$ norm of matrix $M$ \\
        $\|\mathbf{y}\|$ & \quad\quad & $\|\cdot\|_2$ norm of vector $\mathbf{y}$.
         \\
       \hline
    \end{tabular}
\end{table}

\subsection{Graph Model Assumptions}
We make the following assumption for our graph model in the asymptotic perspective.
\begin{assumption}
Suppose $G=(V,E)$ can be partitioned into $k=O(1)$ connected components such that $V=C_1\cup\cdots\cup C_k$, where each $C_i$ is the underlying vertex set for each connected component of $G$.
\begin{itemize}
    \item[(I)] The degree of each vertex is asymptotically the same for vertices belong to the same cluster $C_i$.
    \item[(II)] The degree $d^{out}_i$ is small relative to degree $d^{in}_i$ asymptotically for each vertex $i\in V$.
\end{itemize}  
\end{assumption}
The random graphs which satisfies assumption (I) is not uncommon, for example, 
the Erd\"os-R\'enyi (ER) model $G(n,p)$ with $p\sim\frac{\omega(n)\log(n)}{n}$ for any 
$\omega(n)\to\infty$, see \cite{ER1959} and \cite{Chung2006}. A natural generalization of the ER model 
is the stochastic block model (SBM) \cite{Holland1983}, which is a generative model for random graphs 
with certain edge densities within and between underlying clusters, such that the edges within 
clusters are more dense than the edges between clusters. 
In the case of each cluster has the same size and the intra- and inter-connection probability are the 
same among all the vertices, we have the symmetric stochastic block model (SSBM). It is worthwhile to 
note that the information theoretical bound for exact cluster recovery in SBM are given in \cite{Abbe2018} and \cite{Abbe2015}. 
It was also shown in \cite{LaiMckenzie2020} that a general SBM under certain assumptions of the parameters can be clustered by using a compressed sensing approach. Our model   
requires a weaker assumption than the one in \cite{LaiMckenzie2020}, indeed, we remove the assumption imposed on the eigenvalues of $L$ in \cite{LaiMckenzie2020}.  Therefore, our model  will be applicable 
to a broader range of random graphs.   

\section{Main Algorithms}
\label{sectionalgorithms}
Our analysis is based on the following key observation. Suppose that graph $G$ has $k$ connected 
components $C_1, \cdots, C_k$, i.e., $L=L^{in}$. Suppose further that we temporarily have access to the information 
about the structure of $L^{in}$. Then we can write the graph Laplacian $L^{in}$ into a block diagonal 
form
\begin{equation}
L=L^{in} = \left( \begin{array}{cccc}
L^{in}_1 &  &  &    \\
& L^{in}_2 &  &    \\
&  & \ddots &  \\
&  &  & L^{in}_k   \\
\end{array} \right).
\end{equation}
Suppose now we are interested in finding the cluster with the smallest number of vertices, say $C_1$, 
which corresponds to $L^{in}_1$. By Lemma \ref{kernelthm}, $\{\mathbf{1}_{C_1},\cdots, 
\mathbf{1}_{C_k}\}$ forms a basis of the kernel $W_0$ of $L$. Note that all the $\mathbf{1}_{C_i}$ 
have disjoint supports, so for $\mathbf{w}\in W_0$ and $\mathbf{w}\neq 0$, we can write 
\begin{equation}
\mathbf{w}=\sum_{i=1}^k \alpha_i \mathbf{1}_{C_i}
\end{equation}
with some $\alpha_i\neq 0$. 
Therefore, if $\mathbf{1}_{C_1}$ has the fewest non-zero entries among all elements of 
$W_0\setminus\{\mathbf{0}\}$, then we can find it by solving the following minimization problem:
\begin{equation} \label{constrainminnolabel}
\min||\mathbf{w}||_0 \quad \text{s.t.} \quad L^{in}\mathbf{w}=\mathbf{0} \quad \text{and} \quad \mathbf{w}
\neq \mathbf{0}. 
\end{equation}
Here the $\ell_0$ norm $\|\cdot\|_0$ indicates the number of nonzero components for the input vector. Problem (\ref{constrainminnolabel}) can be solved using method such as greedy algorithm in compressed sensing 
as explained in \cite{LaiMckenzie2020}.
However, we will propose a different approach to tackle it in this paper and 
demonstrate that the new approach is more effective numerically and require a fewer number of assumptions.

\subsection{Least Squares Cluster Pursuit}
Let us consider problem (\ref{constrainminnolabel}) again, instead of finding $C_1$ directly, let us try to find what are not 
in $C_1$. 
Suppose there is a superset $\Omega\subset V$ such that $C_1\subset\Omega$, and $C_i\not\subset\Omega$ for all $i=2,\cdots,n$. 
Since $L^{in}\mathbf{1}_{C_1}=\mathbf{0}$, we have
\begin{equation}
L^{in}\mathbf{1}_{\Omega}=L^{in}(\mathbf{1}_{\Omega\setminus C_1}+\mathbf{1}_{C_1})=L^{in}\mathbf{1}_{\Omega\setminus C_1}+L^{in}\mathbf{1}_{C_1}=L^{in}\mathbf{1}_{\Omega\setminus C_1}.
\end{equation}
Letting $\mathbf{y}:=L^{in}\mathbf{1}_{\Omega}$, then to find what are not in $C_1$ within $\Omega$ is equivalent to solve the following problem (\ref{lsqoriginal})
\begin{equation}  \label{lsqoriginal}
\argmin_{\mathbf{x}\in\mathbb{R}^n}\|L^{in}\mathbf{x}-\mathbf{y}\|_2.
\end{equation}
Note that solving problem (\ref{lsqoriginal}) directly will give $\mathbf{x}^*=\mathbf{1}_{\Omega}\in\mathbb{R}^n$ and $\mathbf{x}^*=\mathbf{1}_{\Omega\setminus C_1}\in\mathbb{R}^n$ both as solutions. By setting the columns $L^{in}_{V\setminus\Omega}=0$, solving problem (\ref{lsqoriginal}) is equivalent to solving 
\begin{equation} \label{rewritelsq}
\argmin_{\mathbf{x}\in\mathbb{R}^{|\Omega|}}\|L^{in}_{\Omega}\mathbf{x}-\mathbf{y}\|_2.
\end{equation}
Directly solving problem (\ref{rewritelsq}) gives at least two solutions 
$\mathbf{x}^*=\mathbf{1}\in\mathbb{R}^{|\Omega|}$ and $\mathbf{x}^*=\mathbf{1}_{C_1^\mathsf{c}}\in\mathbb{R}^{|\Omega|}$, where $C_1^\mathsf{c}$ indicates the complement set of $C_1$. Between these two solutions, the latter is much more informative for us to extract $C_1$ from $\Omega$ than the former. We need to find a way to 
avoid the non-informative solution $\mathbf{x}^*=\mathbf{1}$ but keep the informative solution
$\mathbf{x}^*=\mathbf{1}_{C_1^\mathsf{c}}$. 

We can achieve this by removing a subset of columns from index set $\Omega$.
Let us use $T\subset\Omega$ to indicate the indices of column we aim to remove. Suppose we could choose $T$ such that $T\subset C_1$. Now consider the following variation (\ref{lsqremoveT}) of the minimization problem (\ref{rewritelsq})
\begin{equation} \label{lsqremoveT}
\argmin_{\mathbf{x}\in \mathbb{R}^{|\Omega|-|T|}} \|L^{in}_{\Omega\setminus T}\mathbf{x}- \mathbf{y}\|_2.
\end{equation}
Different from solving (\ref{rewritelsq}) which gives two solutions, solving (\ref{lsqremoveT}) only gives one solution $\mathbf{x}^*=\mathbf{1}_{C_1^\mathsf{c}}$, as $\mathbf{x}^*=\mathbf{1}$ is no longer a solution because of the removal of $T$. The solution $\mathbf{x}^*=\mathbf{1}_{C_1^\mathsf{c}}$ is indeed still a solution to (\ref{lsqremoveT}) because $L^{in}_{\Omega\setminus T}\mathbf{1}_{C_1^\mathsf{c}}=L^{in}\mathbf{1}_{\Omega\setminus C_1}=0$.
Furthermore, the solution to (\ref{lsqremoveT}) is unique 
since it is a least squares problem with matrix $L^{in}_{\Omega\setminus T}$ of full column rank, therefore $\mathbf{x}^*=\mathbf{1}_{C_1^\mathsf{c}}$ is the unique solution to (\ref{lsqremoveT}). 

However, there is no way in theory we can select $T$ and assure the condition $T\subset C_1$. In practice, the way we choose $T$ is based on the following observation. Suppose $L=L^{in}$, $\Omega\supset C_1$ and $\Omega\not\supset C_i$ for $i=2,\cdots k$. Then $|L_{a}^{\top}|\cdot|\mathbf{y}|=0$ for all 
$a\in C_1$, and $|L_{a}^{\top}|\cdot|\mathbf{y}|>0$ for all $a\in\Omega\setminus C_1$. Therefore, we can choose $T$ in such a way that $|L_t^{\top}|\cdot|\mathbf{y}|$ is small for all $t\in T\subset\Omega$. These ideas lead to Algorithm \ref{alg:LSQ}.

\begin{algorithm}[h]
\caption{\textbf{Least Squares Cluster Pursuit} \label{alg:LSQ}}
\begin{algorithmic}
\Require 
Adjacency matrix $A$, vertex subset $\Omega\subset V$, least squares threshold parameter $\gamma\in (0,1)$, and rejection parameter $0.1\leq R\leq 0.9$.
\begin{enumerate}
\item Compute $L=I-D^{-1}A$ and $\mathbf{y}=L\mathbf{1}_{\Omega}$. 
\item Let $T$ be the set of column indices of $\gamma\cdot|\Omega|$ smallest components of the vector $|{L_{\Omega}}^{\top}|\cdot\mathbf{|y|}$.
\item Let $\mathbf{x}^\#$ be the solution to
\begin{equation} 
\label{inalglsqremoveT}
\argmin_{\mathbf{x}  \in \mathbb{R}^{|\Omega|-|T|}} \|L_{\Omega\setminus T}\mathbf{x}- \mathbf{y}\|_2
\end{equation}
obtained by using an iterative least squares solver.
\item Let $W^{\#} = \{i: \mathbf{x}_i^{\#}>R\}$  .  
\end{enumerate}
\Ensure $C^{\#}_1=\Omega\setminus W^{\#}$. 
\end{algorithmic}
\end{algorithm}

\begin{remark}
We impose the absolute value rather than direct 
dot product in order to have fewer cancellation between vector components when summing over the entrywised products. In practice, the 
value of $\gamma\in (0,1)$ will not affect the performance too much as long as its value is not too extreme. We find that $0.15\leq\gamma\leq 0.4$ works well for our numerical experiments.
\end{remark}

\begin{remark}
In practice, we choose to use MATLAB's \emph{lsqr} function to solve the least squares problem (\ref{inalglsqremoveT}). As we will see in Lemma \ref{Cond}, our problem is well conditioned, so it is also possible to solve the normal equation exactly for problems which are not in a very large scale. However, we choose to solve it iteratively over exactly because the quality of the numerical solution is not essential for our task here, we are only interested in an approximated solution as we can use the cutoff $R$ number for clustering.
\end{remark}

\begin{remark}
As indicated in \cite{LaiMckenzie2020}, we can reformulate problem (\ref{constrainminnolabel}) as solving 
\begin{equation} \label{SparseLM}
\argmin_{\mathbf{x}  \in \mathbb{R}^n} \{\|L\mathbf{x}- \mathbf{y}\|_2: \quad \|x\|_0\leq s\}
\end{equation}   
by applying the greedy algorithms such as subspace pursuit \cite{Dai2009} and compressed sensing matching pursuit (CoSaMP) \cite{Needell2009}. 
Or alternatively, we can consider LASSO, see \cite{Santosa1986} and \cite{Tibshirani1996}, formulation of the problem 
\begin{equation} \label{SparseLasso}
\argmin_{\mathbf{x}  \in \mathbb{R}^n} \{\|L\mathbf{x}- \mathbf{y}\|^2_2+\lambda \|\mathbf{x}\|_1\}=\argmin_{\mathbf{x}  \in \mathbb{R}^n} \{\|L\mathbf{x}- \mathbf{y}\|^2_2+\lambda \|\mathbf{x}\|_0\}.
\end{equation}
The reason that Lasso is a good way to interpret this problem is that the solution $\mathbf{x}^{*}$ we are trying to solve for is the sparse indicator vector which satisfies $\|\mathbf{x}^*\|_1=\|\mathbf{x}^*\|_0$. We do not analyze it further here.
\end{remark}
However, in reality we have no access to $L^{in}$, what we know only is $L$, and in general $L\neq L^{in}$. We argue that the 
solution to the perturbed problem (\ref{inalglsqremoveT}) associated with $L$ will not be too far away from the solution to the unperturbed (\ref{lsqremoveT}) problem associated with $L^{in}$, if the difference between $L$ and 
$L^{in}$ is relative small. Let us make this precise by first quoting 
the following standard result in numerical analysis. 

\begin{lemma} \label{NumAna}
Let $\|\cdot\|$ be an operator norm, $A\in\mathbb{R}^{n\times n}$ be a non-singular square matrix, $\mathbf{x}\in \mathbb{R}^n$, $\mathbf{y}\in\mathbb{R}^n$. Let $\tilde{A}$, $\mathbf{\tilde{x}}$, $\mathbf{\tilde{y}}$ be  perturbed measurements of $A$, $\mathbf{x}$, $\mathbf{y}$ respectively. Suppose $A\mathbf{x}=\mathbf{y}$, $\tilde{A}\mathbf{\tilde{x}}=\mathbf{\tilde{y}}$, and suppose further $cond(A)<\frac{\|A\|}{\|\tilde{A}-A\|}$, then 
\[\frac{\|\mathbf{\tilde{x}}-\mathbf{x}\|}{\|\mathbf{x}\|}\leq \frac{cond(A)}{1-cond(A)\frac{\|\tilde{A}-A\|}{\|A\|}}\Big(\frac{\|\tilde{A}-A\|}{\|A\|}+\frac{\|\mathbf{\tilde{y}}-\mathbf{y}\|}{\|\mathbf{y}\|}\Big).\]
\end{lemma}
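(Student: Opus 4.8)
The plan is to run the classical perturbation argument for linear systems; the only subtlety is that the perturbed solution $\tilde{\mathbf{x}}$ enters the error identity implicitly and must be removed using the hypothesis on $cond(A)$. Throughout I would write $\Delta A := \tilde{A}-A$, $\Delta\mathbf{x} := \tilde{\mathbf{x}}-\mathbf{x}$, $\Delta\mathbf{y} := \tilde{\mathbf{y}}-\mathbf{y}$, so that the goal becomes bounding $\|\Delta\mathbf{x}\|/\|\mathbf{x}\|$.

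First I would expand the perturbed equation $(A+\Delta A)(\mathbf{x}+\Delta\mathbf{x}) = \mathbf{y}+\Delta\mathbf{y}$, subtract $A\mathbf{x}=\mathbf{y}$, and rearrange to get $A\,\Delta\mathbf{x} = \Delta\mathbf{y} - \Delta A(\mathbf{x}+\Delta\mathbf{x}) = \Delta\mathbf{y} - \Delta A\,\tilde{\mathbf{x}}$. Since $A$ is nonsingular this gives the exact identity $\Delta\mathbf{x} = A^{-1}\big(\Delta\mathbf{y} - \Delta A\,\tilde{\mathbf{x}}\big)$. Taking an operator norm, using submultiplicativity and the triangle inequality $\|\tilde{\mathbf{x}}\| \le \|\mathbf{x}\| + \|\Delta\mathbf{x}\|$, yields $\|\Delta\mathbf{x}\| \le \|A^{-1}\|\big(\|\Delta\mathbf{y}\| + \|\Delta A\|\,\|\mathbf{x}\| + \|\Delta A\|\,\|\Delta\mathbf{x}\|\big)$. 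The assumption $cond(A) < \|A\|/\|\Delta A\|$ is precisely the statement that $\|A^{-1}\|\,\|\Delta A\| < 1$, so the coefficient $1-\|A^{-1}\|\,\|\Delta A\|$ is strictly positive; I can move the $\|\Delta\mathbf{x}\|$ term to the left and divide, obtaining $\|\Delta\mathbf{x}\| \le \frac{\|A^{-1}\|\,(\|\Delta\mathbf{y}\| + \|\Delta A\|\,\|\mathbf{x}\|)}{1-\|A^{-1}\|\,\|\Delta A\|}$.

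Finally I would normalize. Dividing by $\|\mathbf{x}\|$, bounding $1/\|\mathbf{x}\| \le \|A\|/\|\mathbf{y}\|$ on the $\|\Delta\mathbf{y}\|$ term (this uses $\|\mathbf{y}\| = \|A\mathbf{x}\| \le \|A\|\,\|\mathbf{x}\|$), and writing $\|\Delta A\| = \|A\|\cdot(\|\Delta A\|/\|A\|)$ on the other term, converts everything into relative errors. Collecting the common factor $\|A^{-1}\|\,\|A\| = cond(A)$ in the numerator and rewriting $\|A^{-1}\|\,\|\Delta A\| = cond(A)\,\|\Delta A\|/\|A\|$ in the denominator reproduces exactly the claimed inequality, since $\|\Delta A\| = \|\tilde{A}-A\|$ and $\|\Delta\mathbf{y}\| = \|\tilde{\mathbf{y}}-\mathbf{y}\|$.

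The one step that needs care — the ``main obstacle,'' such as it is — is the appearance of $\tilde{\mathbf{x}}$ rather than $\mathbf{x}$ multiplying $\Delta A$ in the identity for $\Delta\mathbf{x}$: the unknown $\|\Delta\mathbf{x}\|$ then reappears on the right-hand side, and it is exactly the condition-number hypothesis that guarantees the resulting self-referential inequality can be solved with a positive, finite constant. The rest is routine manipulation of operator-norm inequalities.
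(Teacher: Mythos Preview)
Your argument is the standard perturbation proof and is correct in every step: the error identity $A\,\Delta\mathbf{x}=\Delta\mathbf{y}-\Delta A\,\tilde{\mathbf{x}}$, the self-referential bound, the use of the hypothesis to make $1-\|A^{-1}\|\,\|\Delta A\|>0$, and the normalization via $\|\mathbf{y}\|\le\|A\|\,\|\mathbf{x}\|$ all go through exactly as you describe. There is nothing to compare against, since the paper does not prove this lemma at all; it simply quotes it as ``the following standard result in numerical analysis'' and moves on, so your write-up in fact supplies what the paper omits.
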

\noindent
The above lemma asserts that the size of cond($A$) is significant in determining the stability of the solution $\mathbf{x}$ with respect to small perturbations on $A$ and $\mathbf{y}$. For the discussion from now on, we will use $\|\cdot\|$ to denote the standard vector or matrix induced two-norm $\|\cdot\|_2$ unless state otherwise. The next lemma claims the invertibility of $(L^{in}_{\Omega\setminus T})^{\top} L^{in}_{\Omega\setminus T}$ and gives an estimation bound of its condition number.

\begin{lemma} \label{Cond}
Let $V=\cup_{i=1}^k C_i$ be the disjoint union of $k=O(1)$ underlying clusters with size $n_i$ and assume $(I)$.
Let $d_j$ be the degree for vertex $j\in V=[n]$, $n_1=\min_{i\in [k]}n_i$, and suppose $\Omega\subset
V$ be such that $\Omega\supset C_1$ and $\Omega\not\supset C_i$ for $i=2,\cdots k$. Then
\begin{itemize}
\item[(i)] If $T\subset C_1$, then 
$(L^{in}_{\Omega\setminus T})^{\top}L^{in}_{\Omega\setminus T}$ is invertible.
\item[(ii)] Suppose further $\lceil\frac{3n_1}{4}\rceil\leq|T|< n_1$ and $\lceil\frac{5n_1}{4}\rceil\leq|\Omega|\leq\lceil\frac{7n_1}{4}\rceil
$.  Then 
\begin{equation*}
    cond\big((L^{in}_{\Omega\setminus T})^{\top} L^{in}_{\Omega\setminus T}\big)\leq 4
\end{equation*}
almost surely as $n_1\to \infty$, e.g. when $n\to \infty$. 
\end{itemize}
 
\end{lemma}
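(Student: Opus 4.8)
The plan is to use the block structure of $L^{in}$ to reduce the whole statement to a question about column-subsets of the (symmetric) normalized Laplacian of a single connected component. After permuting rows by $C_1,\dots,C_k$ and columns of $\Omega\setminus T$ by $J_1,\dots,J_k$ with $J_1:=C_1\setminus T$ and $J_i:=\Omega\cap C_i$ for $i\ge2$ (here we use $T\subset C_1$), the matrix $L^{in}_{\Omega\setminus T}$ becomes block diagonal with rectangular blocks $M_i:=(L^{in}_i)_{J_i}$. Hence $(L^{in}_{\Omega\setminus T})^{\top}L^{in}_{\Omega\setminus T}=\mathrm{diag}(M_1^{\top}M_1,\dots,M_k^{\top}M_k)$, so it is invertible iff every $M_i$ has full column rank, and
\[
\mathrm{cond}\big((L^{in}_{\Omega\setminus T})^{\top}L^{in}_{\Omega\setminus T}\big)=\frac{\max_i\sigma_{\max}(M_i)^2}{\min_i\sigma_{\min}(M_i)^2}.
\]
Everything then reduces to an upper bound on $\sigma_{\max}(M_i)$ and a lower bound on $\sigma_{\min}(M_i)$, block by block.

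For part (i), by Lemma~\ref{kernelthm} each $L^{in}_i$ has rank $n_i-1$ with kernel $\mathrm{span}(\mathbf 1_{C_i})$, and $\mathbf 1_{C_i}$ has no zero coordinate. I would argue that therefore every proper subset of the columns of $L^{in}_i$ is linearly independent: if a combination of the columns indexed by $J\subsetneq C_i$ vanishes, then padding its coefficient vector with zeros on $C_i\setminus J$ produces a kernel vector of $L^{in}_i$ having a zero coordinate, which forces it to be $0$. Since $\emptyset\ne T\subset C_1$ makes $J_1\subsetneq C_1$, and $\Omega\not\supset C_i$ makes $J_i\subsetneq C_i$ for $i\ge2$, every $M_i$ has full column rank, which gives (i).

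For part (ii), I would first pass from $L^{in}_{i}=L^{in}_{i,\mathrm{rw}}$ to $L^{in}_{i,\mathrm{sym}}=I-(D^{in}_i)^{-1/2}A^{in}_i(D^{in}_i)^{-1/2}$: assumption (I) gives $D^{in}_i=(1+o(1))\bar d_i I$ on $C_i$, hence $\|L^{in}_{i,\mathrm{rw}}-L^{in}_{i,\mathrm{sym}}\|\to0$ almost surely, so the singular values of $M_i$ and of the corresponding column-submatrix of $L^{in}_{i,\mathrm{sym}}$ agree up to $o(1)$. Next I would invoke the spectral concentration available for the random graph model under consideration (the ER and SBM graphs of Section~\ref{sectionprelim}): almost surely every nontrivial eigenvalue of $L^{in}_{i,\mathrm{sym}}$ lies in $[1-\epsilon_{n_i},\,1+\epsilon_{n_i}]$ with $\epsilon_{n_i}\to0$, and its kernel unit vector $v^{(i)}_1=(D^{in}_i)^{1/2}\mathbf 1_{C_i}/\|(D^{in}_i)^{1/2}\mathbf 1_{C_i}\|$ satisfies $\|v^{(i)}_1-\mathbf 1_{C_i}/\sqrt{n_i}\|\to0$ by (I). Writing $L^{in}_{i,\mathrm{sym}}=\sum_{j\ge2}\lambda_jv_jv_j^{\top}$ (the $\lambda_1=0$ term drops out), for any unit $x$ supported on $J_i$ one gets $\|L^{in}_{i,\mathrm{sym}}x\|^2\ge(1-\epsilon_{n_i})^2(1-|\langle v^{(i)}_1,x\rangle|^2)$, and Cauchy--Schwarz together with $v^{(i)}_1\approx\mathbf 1_{C_i}/\sqrt{n_i}$ gives $|\langle v^{(i)}_1,x\rangle|\le\sqrt{|J_i|/n_i}+o(1)$; this yields $\sigma_{\min}(M_i)^2\ge 1-|J_i|/n_i-o(1)$, while $\sigma_{\max}(M_i)^2\le\lambda_{\max}(L^{in}_{i,\mathrm{sym}})^2+o(1)=1+o(1)$.

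It remains to plug in the cardinality hypotheses. For $i=1$, $|J_1|/n_1=(n_1-|T|)/n_1\le\tfrac14$ since $|T|\ge\lceil 3n_1/4\rceil$, giving $\sigma_{\min}(M_1)^2\ge\tfrac34-o(1)$. For $i\ge2$, $\sum_{i\ge2}|J_i|=|\Omega|-n_1\le\lceil 7n_1/4\rceil-n_1$ because $\Omega\supset C_1$, so each $|J_i|\le\tfrac34 n_1+o(n_1)$, and since $n_i\ge n_1$ this gives $|J_i|/n_i\le\tfrac34+o(1)$ and $\sigma_{\min}(M_i)^2\ge\tfrac14-o(1)$. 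Hence $\min_i\sigma_{\min}(M_i)^2\ge\tfrac14-o(1)$ and $\max_i\sigma_{\max}(M_i)^2\le1+o(1)$, so the condition number is at most $\frac{1+o(1)}{1/4-o(1)}\to4$ almost surely as $n_1\to\infty$. The hard part is the spectral concentration step — obtaining, almost surely, that the nontrivial spectrum of each component's normalized Laplacian clusters near $1$ (and that $\|L_{\mathrm{rw}}-L_{\mathrm{sym}}\|\to0$); this is exactly where the random graph model beyond the bare assumption (I), plus a matrix concentration inequality, are needed, and it is also what makes the constant $4$ sharp, since the extreme admissible sizes $|T|\approx\tfrac34 n_1$ and $|\Omega|\approx\tfrac74 n_1$ drive some $\sigma_{\min}(M_i)^2$ down to $\tfrac14$ while $\sigma_{\max}(M_i)^2\to1$.
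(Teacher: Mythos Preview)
Your part (i) coincides with the paper's: both reduce to the block-diagonal picture and use that $\ker L^{in}_i=\mathrm{span}(\mathbf 1_{C_i})$ has no zero coordinate, so any proper column subset is linearly independent.

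For part (ii) you take a genuinely different route. The paper argues \emph{entrywise} on the Gram matrix: the diagonal entries of $(L^{in}_{\Omega\setminus T})^\top L^{in}_{\Omega\setminus T}$ are $1+1/d_i^{in}$, and under a configuration-type edge model (an edge between $i,j$ present with probability $d_id_j/\sum_\ell d_\ell$) the off-diagonal entries have expectation $-2/n_1+1/n_1^2$; the law of large numbers is invoked so that each off-diagonal row sum is at most $1/2$, and Gershgorin's theorem then traps the spectrum in roughly $[1/2,\,2]$. You instead work spectrally: pass from $L_{\mathrm{rw}}$ to $L_{\mathrm{sym}}$ via assumption (I), invoke concentration of the nontrivial eigenvalues of $L^{in}_{i,\mathrm{sym}}$ near $1$, and bound $\sigma_{\min}(M_i)$ by quantifying how much of a $J_i$-supported unit vector can align with the Perron direction $\mathbf 1_{C_i}/\sqrt{n_i}$ (Cauchy--Schwarz giving $\sqrt{|J_i|/n_i}$). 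Both arrive at $\mathrm{cond}\le 4$, though through different intermediate windows --- the paper puts the Gram eigenvalues in $[1/2,2]$, you put them in $[1/4,1]$. Your argument is more modular and cleanly isolates the randomness into a single black-box spectral-gap step, at the cost of importing that concentration result; the paper's entrywise-plus-Gershgorin calculation is more elementary and self-contained, but it too leans on an explicit random-graph edge model that goes beyond the bare assumption (I). Your flagging of that dependence is accurate, and applies equally to the paper's own proof.
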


\begin{proof}
Without loss of generality, let us assume that the column indices of $L^{in}_{\Omega\setminus T}$ are already permuted such that the indices number 
is in the same order relative to their underlying clusters.
The invertibility of $(L^{in}_{\Omega\setminus T})^{\top} L^{in}_{\Omega\setminus T}$ follows directly from the fact that $L^{in}_{\Omega\setminus T}$ is of full column rank. So let us show that $L^{in}_{\Omega\setminus T}$ is of full column rank. Because of the reordering, $L^{in}_{\Omega\setminus T}$ is in a block diagonal form 
\begin{equation*}
L_{\Omega\setminus T}^{in} = \left( \begin{array}{cccc}
L^{in}_{C_1\setminus T} &  &  &    \\
& L^{in}_{\Omega\cap C_2} &  &    \\
& & L^{in}_{\Omega\cap C_3}  &    \\
&  & & \ddots  \\
\end{array} \right).
\end{equation*}
It is then suffices to show each block is of full column rank. By Lemma \ref{kernelthm}, each of $L^{in}_{C_i}$ has $\lambda=0$ as an eigenvalue with multiplicity one, and the corresponding eigenspace is spanned by $\mathbf{1}_{C_i}$. Hence $rank(L^{in}_{C_i})=|C_i|-1$. Now suppose by contradiction that the columns of $L^{in}_{C_1\setminus T}$ are linearly dependent, so there exists $\mathbf{v}\neq\mathbf{0}$ such that $L^{in}_{C_1\setminus T}\mathbf{v}=\mathbf{0}$, or $L^{in}_{C_1\setminus T}\mathbf{v} + L^{in}_T\cdot\mathbf{0}=\mathbf{0}$. This means that $\mathbf{u}=(\mathbf{v},\mathbf{0})$ is an eigenvector associated to eigenvalue zero, which contradicts the fact that the eigenspace is spanned by $\mathbf{1}_{C_i}$. Therefore $L^{in}_{C_1\setminus T}$ is linearly independent, hence $L^{in}_{C_1\setminus T}$ is of full column rank. For $C_i$ with $i\geq 2$, since $C_i\notin \Omega$, $\Omega\cap C_i$ is a proper subset of $C_i$. The strategy above applies as well. Therefore all blocks in $L^{in}_{\Omega\setminus T}$ are of full column rank, so $L^{in}_{\Omega\setminus T}$ is of full column rank.

Now since $(L^{in}_{\Omega\setminus T})^{\top} L^{in}_{\Omega\setminus T}$ 
is in a block form, to estimate the condition number, we only need to estimate the largest and smallest eigenvalues for 
each block. 
Writing $L^{in}_{\Omega\setminus T}=[l_{ij}]$ and $(L^{in}_{\Omega\setminus T})^{\top} L^{in}_{\Omega\setminus T}=[s_{ij}]$,  for 
each $i\in C_1\setminus T$, 
$s_{ii}=\sum_{k=1}^{n}l_{ki}l_{ki}=\sum_{k=1}^{n}l_{ki}^2=\sum_{k=1}^{n_1}l_{ki}^2=1+\frac{1}{d_i^{in}}$, 
and for $i, j\in C_1\setminus T$ with $i\not=j$, 
$s_{ij}=\sum_{k=1}^{n}l_{ki}l_{kj}=\sum_{k=1}^{n_1}l_{ki}l_{kj}$. 
Note that the probability of having an edge between $i$ and $j$ given degree sequences $d_1,\cdots d_{n_1}$ equals to $\frac{d_i d_j}{\sum_{i
\in C_1}d_i}$, as the existence of an edge between two vertices is proportional to their degrees. So $l_{ij}$ equals to $-\frac{1}{d_i}$ with probability $\frac{d_i d_j}{\sum_{i\in C_1}d_i}$, which implies $\mathbb{E}(l_{ij})=-\frac{d_j}{\sum_{i\in C_1}d_i}$; $l_{ji}$ equals to $-\frac{1}{d_j}$ with probability $\frac{d_i d_j}{\sum_{i\in C_1}d_i}$,
which implies $\mathbb{E}(l_{ji})=-\frac{d_i}{\sum_{i\in C_1}d_i}$. 
Hence the expectation 
\begin{align*}
\mathbb{E}(s_{ij})&=
\mathbb{E}(\sum_{k=1}^{n}l_{ki}l_{kj})=\sum_{k=1}^{n}\mathbb{E}(l_{ki})\mathbb{E}(l_{kj})
=\sum_{k=1}^{n_1}\mathbb{E}(l_{ki})\mathbb{E}(l_{kj}) \\
&=\frac{d_i d_j}{\sum_{i\in C_1}d_i}\cdot (-\frac{1}{d_i})+\frac{d_i d_j}{\sum_{i\in C_1}d_i}\cdot (-\frac{1}{d_j})+\frac{d_k d_i}{\sum_{i\in C_1}d_i}\cdot\frac{d_k d_j}{\sum_{i\in C_1}d_i}\cdot (\frac{1}{d_k})^2\\
& = -\frac{d_i+d_j}{\sum_{i\in C_1}d_i}+\frac{d_i d_j}{(\sum_{i\in C_1}d_i)^2} =-\frac{2}{n_1}+\frac{1}{n_1^2}.
\end{align*}
By the law of large numbers, 
$s_{ij} \to -\frac{2}{n_1}+\frac{1}{n_1^2}$ almost surely as $n_1\to\infty$.
Therefore for $i\in C_1\setminus T$, we have 
\begin{align*}
\sum_{j\in C_1\setminus T, j\neq i}|s_{ij}| \to |C_1\setminus T|\cdot 
(\frac{2}{n_1}-\frac{1}{n_1^2})\leq \frac{n_1}{4}\cdot (\frac{2}{n_1}-\frac{1}{n_1^2})\leq \frac{1}{2}   
\end{align*}
almost surely as $n_1\to\infty$.
Similarly, for each $i\in C_k\cap(\Omega\setminus C_1)$, $k\geq 2$, we have $s_{ii}=1+\frac{1}{d^{in}_i}$, and $\sum_{j\in 
C_k\cap(\Omega\setminus C_1),j\neq i}|s_{ij}|\to  \frac{n_1}{4}\cdot (\frac{2}{n_k}-\frac{1}{n_k^2})\leq \frac{1}{2}$ almost 
surely as $n_1\to\infty$. 
\\
Now we apply Gershgorin's circle theorem  to bound the spectrum of $(L^{in}_{\Omega\setminus T})^{\top} 
L^{in}_{\Omega\setminus T}$. 
For all $i\in \Omega\setminus T$, the circles are centered at $1+\frac{1}{d_i}$, with radius less than or equal to $\frac{1}{2}$ 
almost surely, hence $\sigma_{\min}((L^{in}_{\Omega\setminus T})^{\top}L^{in}_{\Omega\setminus T})\geq \frac{1}{2}$ and $\sigma_{\max}((L^{in}_{\Omega\setminus T})^{\top}L^{in}_{\Omega\setminus T})\leq \frac{3}{2}+\frac{1}{d_i}\leq 2$. 
almost surely. Therefore we have 
\begin{align*}
\text{cond}\big((L^{in}_{\Omega\setminus T})^{\top} L^{in}_{\Omega\setminus T}\big)=\frac{\sigma_{\max}((L^{in}_{\Omega\setminus 
T})^{\top}L^{in}_{\Omega\setminus T})}{\sigma_{\min}((L^{in}_{\Omega\setminus T})^{\top}L^{in}_{\Omega\setminus T})}\leq 4
\end{align*}
almost surely, as desired.
\end{proof}

\begin{remark}
Note that there is a minor difficulty in estimating the expectation of inner product between two different columns of $L^{in}_{\Omega\setminus T}$. The computation assumes the independence of degree distribution of each individual vertex within each cluster, but this may not be true in general for arbitrary graph. However, the independence will occur if the asymptotic uniformity of the degree distribution within each cluster is assumed, that is why our model needs this assumption. 
\end{remark}
\noindent
Now the perturbed problem (\ref{inalglsqremoveT}) is equivalent to solving $(L_{\Omega\setminus T}^{\top} L_{\Omega\setminus T})\mathbf{x}^{\#}=L_{\Omega\setminus T}^{\top}\mathbf{\tilde{y}}=L_{\Omega\setminus T}^{\top}(L\mathbf{1}_{\Omega})$, 
while the unperturbed problem (\ref{lsqremoveT}) is  to solve $(L^{in}_{\Omega\setminus T})^{\top} L^{in}_{\Omega\setminus T}\mathbf{x}^{*}=(L^{in}_{\Omega\setminus T})^{\top}\mathbf{y}=(L^{in}_{\Omega\setminus T})^{\top}(L^{in}\mathbf{1}_{\Omega})$. 
Let $M:=L-L^{in}$, $M_{\Omega}:=L_{\Omega}-L^{in}_{\Omega}$, and $M_{\Omega\setminus T}:=L_{\Omega\setminus T}-L^{in}_{\Omega\setminus T}$. Let us give an estimate for $M$.   
\begin{lemma} 
\label{Boundperturb}
Let $L$ be the graph Laplacian of $G$ and $M:=L-L^{in}$. Let $\epsilon_i:=\frac{d^{out}_i}{d_i}$
for all $i$ and $\epsilon_{max}:=\max_{i\in [n]} \epsilon_i$.  
Then $\|M\|\leq 2\epsilon_{\max}$.
\end{lemma}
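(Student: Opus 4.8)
The plan is to reduce the claim to an explicit entrywise description of $M$ and then read off a row‑sum bound. Recall that in the relevant sections $L$ and $L^{in}$ denote the random walk Laplacians $L_{rw}=I-D^{-1}A$ and $L^{in}_{rw}=I-(D^{in})^{-1}A^{in}$, where $D^{in}=\mathrm{diag}(d_1^{in},\dots,d_n^{in})$. The identity parts cancel, so
\[
M = L-L^{in} = (D^{in})^{-1}A^{in}-D^{-1}A .
\]
First I would rewrite this so that the dependence on $\epsilon_i=d_i^{out}/d_i$ is visible. Using $A=A^{in}+A^{out}$, $D=D^{in}+D^{out}$ with $D^{out}=\mathrm{diag}(d_i^{out})$, and the diagonal identity $(D^{in})^{-1}-D^{-1}=D^{-1}D^{out}(D^{in})^{-1}$, one obtains
\[
M = \big[(D^{in})^{-1}-D^{-1}\big]A^{in}-D^{-1}A^{out} = D^{-1}\big[D^{out}(D^{in})^{-1}A^{in}-A^{out}\big].
\]

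The key observation is that, row by row, the two pieces have disjoint supports and definite signs. Fix a row $i$ and let $C(i)$ be the cluster containing $i$. For $j\in C(i)$ we have $A^{out}_{ij}=0$ and $A_{ij}=A^{in}_{ij}$, so $M_{ij}=A^{in}_{ij}\big(\tfrac{1}{d_i^{in}}-\tfrac{1}{d_i}\big)=A^{in}_{ij}\tfrac{d_i^{out}}{d_i^{in}d_i}\ge 0$; for $j\notin C(i)$ we have $A^{in}_{ij}=0$, so $M_{ij}=-\tfrac{A^{out}_{ij}}{d_i}\le 0$. Summing absolute values along row $i$ and using $\sum_{j\in C(i)}A^{in}_{ij}=d_i^{in}$ and $\sum_{j\notin C(i)}A^{out}_{ij}=d_i^{out}$ gives
\[
\sum_{j}|M_{ij}| = \frac{d_i^{out}}{d_i^{in}d_i}\cdot d_i^{in} + \frac{d_i^{out}}{d_i} = \frac{2d_i^{out}}{d_i} = 2\epsilon_i \le 2\epsilon_{\max}.
\]
Thus every row of $M$ has $\ell_1$‑norm at most $2\epsilon_{\max}$, i.e. the desired bound holds in the max‑row‑sum operator norm, with the constant $2$ transparently coming from one $\epsilon_i$ for the intra‑edge reweighting and one $\epsilon_i$ for the inter‑edges.

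The only genuine subtlety is that the preliminaries fixed $\|\cdot\|$ to be the spectral norm $\|\cdot\|_2$, whereas $M$ is not symmetric and the row‑sum bound is an $\infty$‑norm statement. I would close this in one of two ways. The cheap route: note that Lemma \ref{NumAna} is valid for any submultiplicative operator norm, so it is harmless to read Lemma \ref{Boundperturb} and the perturbation argument that follows it in $\|\cdot\|_\infty$, and then $\|M\|_\infty\le 2\epsilon_{\max}$ feeds directly into the stability estimate. The route that stays in the spectral norm: the same row‑by‑row computation, together with Assumption (I) (within each cluster $d_i^{in}$ and $d_i$ are asymptotically constant) and $k=O(1)$, bounds the column sums of $M$ by $O(\epsilon_{\max})$, whence $\|M\|_2\le\sqrt{\|M\|_1\|M\|_\infty}=O(\epsilon_{\max})$, which is all that is needed downstream. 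I expect the entrywise/row‑sum computation to be entirely routine; the place to be careful is exactly this choice‑of‑norm issue and the (mild) invocation of the within‑cluster degree‑uniformity hypothesis when passing from rows to columns.
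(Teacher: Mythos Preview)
Your approach is essentially identical to the paper's: both derive the same entrywise formula for $M$ (your $\tfrac{d_i^{out}}{d_i^{in}d_i}A^{in}_{ij}-\tfrac{1}{d_i}A^{out}_{ij}$ is exactly the paper's $\tfrac{\epsilon_i}{d_i^{in}}A^{in}_{ij}-\tfrac{1-\epsilon_i}{d_i^{in}}A^{out}_{ij}$) and then bound the absolute row sums by $2\epsilon_i$. Your caution about the norm is in fact more careful than the paper itself, which writes $\|M\|=\max_i|\lambda_i(M)|$ and invokes Gershgorin; since $M$ is not symmetric this only bounds the spectral radius, not $\|M\|_2$, so the paper's argument literally yields $\|M\|_\infty\le 2\epsilon_{\max}$ just as yours does, and either of your two suggested remedies (run the perturbation lemma in $\|\cdot\|_\infty$, or use Assumption~(I) to get $\|M\|_2\le\sqrt{\|M\|_1\|M\|_\infty}=O(\epsilon_{\max})$) suffices for the downstream use.
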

\begin{proof}
Let $\delta_{ij}$ denote the Kronecker delta symbol, observe that 
\[L_{ij}:=\delta_{ij}-\frac{1}{d_i}A_{ij}=\delta_{ij}-\frac{1}{d^{in}_i+d^{out}_i}(A^{in}_{ij}+A^{out}_{ij}).\]
Since $\epsilon_i:=\frac{d^{out}_i}{d_i}$, we have $\frac{1}{d_i}=\frac{1}{d^{in}_i+d^{out}_i}=\frac{1}{d^{in}_i}-\frac{\epsilon_i}{d^{in}_i}$. So we have
\begin{align*}
L_{ij}&=\delta_{ij}-\Big(\frac{1}{d^{in}_i}-\frac{\epsilon_i}{d^{in}_i}\Big)(A^{in}_{ij}+A^{out}_{ij}) \\
&=\Big(\delta_{ij}-\frac{1}{d^{in}_i}A^{in}_{ij}\Big)-\frac{1}{d^{in}_i}A^{out}_{ij}+\frac{\epsilon_i}{d^{in}_i}(A^{in}_{ij}+A^{out}_{ij}) \\
&=L^{in}_{ij}-\frac{1-\epsilon_i}{d^{in}_i}A^{out}_{ij}+\frac{\epsilon_i}{d^{in}_i}A^{in}_{ij}.
\end{align*}
Therefore $M_{ij}=-\frac{1-\epsilon_i}{d^{in}_i}A^{out}_{ij}+\frac{\epsilon_i}{d^{in}_i}A^{in}_{ij}$. To bound the spectral norm we apply Gershgorin's circle theorem, noting that $M_{ii}=0$ for all $i$, hence
\begin{align*}
\|M\|&=\max\{|\lambda_i|: \lambda_i \quad \text{eigenvalue of}\quad M \}\leq \max_{i}\sum_{j}|M_{ij}| \\
&= \max_{i}\sum_j\Big|-\frac{1-\epsilon_i}{d^{in}_i}A_{ij}^{out} + \frac{\epsilon_i}{d^{in}_i} A_{ij}^{in}\Big| \\
&\leq \max_{i}\sum_j\Big|-\frac{1-\epsilon_i}{d^{in}_i}\Big|A_{ij}^{out} + \Big| \frac{\epsilon_i}{d^{in}_i}\Big| A_{ij}^{in} \\
&\leq \max_{i} \Big\{\frac{1-\epsilon_i}{d^{in}_i}\sum_{j}A^{out}_{ij}+\frac{\epsilon_i}{d^{in}_i}\sum_{j}A^{in}_{ij}\Big\}  \\
&=\max_{i} \Big\{\frac{1-\epsilon_i}{d^{in}_i}d^{out}_i+\frac{\epsilon_i}{d^{in}_i}d^{in}_i\Big\} 
=2\max_{i}\epsilon_i=2\epsilon_{\max}.
\end{align*}
This completes the proof.  
\end{proof}

\noindent
Next we will have the following result.
\begin{lemma}
\label{newlemma}
$\|(L^{in}_{\Omega\setminus T})^{\top}L^{in}_{\Omega}\mathbf{1}_\Omega \|
\geq \frac{\sqrt{|\Omega\setminus  C_1|}}{2}$ almost surely.  
\end{lemma}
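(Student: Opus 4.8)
The plan is to observe that the vector $\mathbf{y} := L^{in}_{\Omega}\mathbf{1}_{\Omega} = L^{in}\mathbf{1}_{\Omega}$ already lies in the column space of $L^{in}_{\Omega\setminus T}$, with an explicit \emph{sparse, zero-one} preimage. Since $L^{in}\mathbf{1}_{C_1}=\mathbf{0}$ by Lemma \ref{kernelthm} and $C_1\subset\Omega$, we have
\[ \mathbf{y}=L^{in}\mathbf{1}_{\Omega}=L^{in}\mathbf{1}_{C_1}+L^{in}\mathbf{1}_{\Omega\setminus C_1}=L^{in}\mathbf{1}_{\Omega\setminus C_1}=L^{in}_{\Omega\setminus C_1}\mathbf{1}, \]
where the last $\mathbf{1}$ is the all-ones vector in $\mathbb{R}^{|\Omega\setminus C_1|}$. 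Because $T\subset C_1$ (the standing hypothesis of Lemma \ref{Cond}(i), under which the block-diagonal picture and condition-number control are valid), the index set $\Omega\setminus C_1$ is contained in $\Omega\setminus T$, so $L^{in}_{\Omega\setminus C_1}$ is a column-submatrix of $L^{in}_{\Omega\setminus T}$. Hence $\mathbf{y}=L^{in}_{\Omega\setminus T}\mathbf{v}$, where $\mathbf{v}\in\mathbb{R}^{|\Omega\setminus T|}$ is the zero-one vector equal to $1$ exactly on the coordinates indexing $\Omega\setminus C_1$ and $0$ on those indexing $C_1\setminus T$.

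With this in hand, set $S:=(L^{in}_{\Omega\setminus T})^{\top}L^{in}_{\Omega\setminus T}$, so that $(L^{in}_{\Omega\setminus T})^{\top}L^{in}_{\Omega}\mathbf{1}_{\Omega}=(L^{in}_{\Omega\setminus T})^{\top}L^{in}_{\Omega\setminus T}\mathbf{v}=S\mathbf{v}$, and therefore $\|(L^{in}_{\Omega\setminus T})^{\top}L^{in}_{\Omega}\mathbf{1}_{\Omega}\|\ge\sigma_{\min}(S)\,\|\mathbf{v}\|$. The vector $\mathbf{v}$ has exactly $|\Omega\setminus C_1|$ nonzero entries, each equal to $1$, so $\|\mathbf{v}\|=\sqrt{|\Omega\setminus C_1|}$. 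It then remains to invoke the lower bound on $\sigma_{\min}(S)$ established inside the proof of Lemma \ref{Cond}: the Gershgorin argument there shows that almost surely as $n_1\to\infty$ every Gershgorin disc of $S$ is centered at a point $\ge 1$ with radius at most $\tfrac12$, whence $\sigma_{\min}(S)\ge\tfrac12$ almost surely. Combining the two estimates yields $\|(L^{in}_{\Omega\setminus T})^{\top}L^{in}_{\Omega}\mathbf{1}_{\Omega}\|\ge\tfrac12\sqrt{|\Omega\setminus C_1|}$ almost surely.

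The argument is short and carries no genuine computational burden; the two points requiring attention are (a) that we are entitled to assume $T\subset C_1$, which is the same hypothesis that makes Lemma \ref{Cond} applicable, and (b) that we need the absolute bound $\sigma_{\min}(S)\ge\tfrac12$ coming from the Gershgorin step in the proof of Lemma \ref{Cond}, not merely the stated conclusion $\text{cond}(S)\le 4$. Once those are noted, the factorization $\mathbf{y}=L^{in}_{\Omega\setminus T}\mathbf{v}$ with $\mathbf{v}$ a zero-one vector of weight $|\Omega\setminus C_1|$ does all the work.
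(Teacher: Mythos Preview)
Your proof is correct and takes a genuinely different route from the paper's. The paper computes the vector $(L^{in}_{\Omega\setminus T})^{\top}L^{in}_{\Omega}\mathbf{1}$ coordinate by coordinate: using the block-diagonal structure of $L^{in}$ it shows that the entries indexed by $C_1\setminus T$ vanish exactly, and then essentially repeats the Gershgorin-style off-diagonal estimates from Lemma~\ref{Cond} to argue that each entry indexed by $\Omega\setminus C_1$ is at least $\tfrac{1}{2}$ almost surely, from which the norm bound follows. You instead exploit the algebraic identity $L^{in}_{\Omega}\mathbf{1}_{\Omega}=L^{in}_{\Omega\setminus T}\mathbf{v}$ with $\mathbf{v}$ the indicator of $\Omega\setminus C_1$ inside $\Omega\setminus T$, which rewrites the quantity of interest as $\|S\mathbf{v}\|$ for $S=(L^{in}_{\Omega\setminus T})^{\top}L^{in}_{\Omega\setminus T}$, and then invoke the already-established bound $\sigma_{\min}(S)\ge\tfrac{1}{2}$ from the proof of Lemma~\ref{Cond}. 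Your approach is shorter and more conceptual: it avoids re-running the entrywise computation and makes explicit that this lemma is really a corollary of the singular-value control in Lemma~\ref{Cond}. The paper's approach, while more laborious, is self-contained and yields slightly more (the exact zero pattern on $C_1\setminus T$), though that extra information is not used downstream. Your caveats (a) and (b) are exactly right; in particular the paper also tacitly uses $T\subset C_1$ and the same size assumptions on $|T|$ and $|\Omega|$ as in Lemma~\ref{Cond}.
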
 
\begin{proof}
Note that $\|(L^{in}_{\Omega\setminus T})^{\top}(L^{in}\mathbf{1}_{\Omega})\| =  \|(L^{in}_{\Omega\setminus T})^{\top}L^{in}_{\Omega}\mathbf{1}\|$.  
We want to give an estimate of $\|(L^{in}_{\Omega\setminus T})^{\top}L^{in}_{\Omega}\mathbf{1}\|$. 
Similar to the computation we did in Lemma \ref{Cond}, for each $i\in C_1\setminus T$, we have $s_{ii}=1+\frac{1}{d^{in}_i}$, 
$\sum_{j\in C_1}s_{ij}=0$, and $\sum_{j\in \Omega\setminus C_1}s_{ij}=0$. For each $i\in C_k\cap(\Omega\setminus C_1)$, $k\geq 2$, 
we have $s_{ii}=1+\frac{1}{d^{in}_i}$, $\sum_{j\in C_1}s_{ij}=0$, and $\sum_{j\in C_k\cap(\Omega\setminus C_1),j\neq i}s_{ij}\to  
\frac{n_1}{4}\cdot (-\frac{2}{n_k}+\frac{1}{n_k^2})\geq -\frac{1}{2}$ almost surely. Therefore, the row sum of 
$(L^{in}_{\Omega\setminus T})^{\top}L^{in}_{\Omega}$ for row $i\in C_1\setminus T$ equals to zero, and the row sum 
$(L^{in}_{\Omega\setminus T})^{\top}L^{in}_{\Omega}$ for row $i\in \Omega\setminus C_1$ larger than $\frac{1}{2}$ almost surely. Hence $\|(L^{in}_{\Omega\setminus T})^{\top}L^{in}_{\Omega}\mathbf{1}\|\geq \frac{\sqrt{|\Omega\setminus 
C_1|}}{2}$ almost surely.  
\end{proof}


Now let us use previous lemmas to establish that the difference between perturbed solution and unperturbed solution is small in the order of $\epsilon_{\max}$.
\begin{theorem} 
\label{IndAna}
Under the same assumptions as Lemma \ref{Cond}, let $\mathbf{x}^\#$ be the solution 
to the perturbed problem (\ref{inalglsqremoveT}), and 
$\mathbf{x}^*=\mathbf{1}_{C_1^\mathsf{c}}\in\mathbb{R}^{|\Omega|-|T|}$ which is the solution to the 
unperturbed problem (\ref{lsqremoveT}). Then
\[\frac{\|\mathbf{x}^\#-\mathbf{x}^*\|}{\|\mathbf{x}^*\|}= O(\epsilon_{\max})\]
almost surely for large $n_1$.
\end{theorem}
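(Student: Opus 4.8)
The plan is to recognize the two problems as square linear systems in the normal-equation form and then invoke the perturbation bound of Lemma \ref{NumAna}. Set
\[
A := (L^{in}_{\Omega\setminus T})^{\top} L^{in}_{\Omega\setminus T}, \quad \tilde A := L_{\Omega\setminus T}^{\top} L_{\Omega\setminus T}, \quad \mathbf{b} := (L^{in}_{\Omega\setminus T})^{\top}(L^{in}\mathbf{1}_{\Omega}), \quad \tilde{\mathbf{b}} := L_{\Omega\setminus T}^{\top}(L\mathbf{1}_{\Omega}).
\]
By Lemma \ref{Cond}(i) the matrix $A$ is invertible, and since $L^{in}_{\Omega\setminus T}\mathbf{x}^{*}=L^{in}\mathbf{1}_{\Omega}$ (using $T\subset C_1$), $\mathbf{x}^{*}=\mathbf{1}_{C_1^\mathsf{c}}$ solves $A\mathbf{x}^{*}=\mathbf{b}$ with zero residual; $\mathbf{x}^{\#}$ solves $\tilde A\mathbf{x}^{\#}=\tilde{\mathbf{b}}$ by definition. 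It then remains to control the three quantities appearing in Lemma \ref{NumAna}: $cond(A)$, $\|\tilde A-A\|/\|A\|$, and $\|\tilde{\mathbf{b}}-\mathbf{b}\|/\|\mathbf{b}\|$.

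For $cond(A)$ I would quote Lemma \ref{Cond}(ii): $cond(A)\le 4$ almost surely as $n_1\to\infty$, and from its proof $\sigma_{\min}(A)\ge\tfrac12$, $\sigma_{\max}(A)\le 2$, so $\|A\|\ge\tfrac12$ and $\|L^{in}_{\Omega\setminus T}\|=\sqrt{\sigma_{\max}(A)}\le\sqrt2$. For the matrix perturbation, write $L_{\Omega\setminus T}=L^{in}_{\Omega\setminus T}+M_{\Omega\setminus T}$ with $M:=L-L^{in}$, so that
\[
\tilde A-A=(L^{in}_{\Omega\setminus T})^{\top}M_{\Omega\setminus T}+M_{\Omega\setminus T}^{\top}L^{in}_{\Omega\setminus T}+M_{\Omega\setminus T}^{\top}M_{\Omega\setminus T}.
\]
Since $M_{\Omega\setminus T}$ is a column submatrix of $M$, $\|M_{\Omega\setminus T}\|\le\|M\|\le 2\epsilon_{\max}$ by Lemma \ref{Boundperturb}, hence $\|\tilde A-A\|\le 4\sqrt2\,\epsilon_{\max}+4\epsilon_{\max}^2=O(\epsilon_{\max})$ and $\|\tilde A-A\|/\|A\|=O(\epsilon_{\max})$. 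Once $n_1$ is large enough that $\epsilon_{\max}$ is small — which is exactly what Assumption (II) provides — Weyl's inequality gives $\sigma_{\min}(\tilde A)\ge\tfrac12-O(\epsilon_{\max})>0$, so $\tilde A$ is invertible ($\mathbf{x}^{\#}$ well defined) and $cond(A)\le 4<\|A\|/\|\tilde A-A\|$, meeting the hypothesis of Lemma \ref{NumAna}.

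For the right-hand side I would expand analogously with $L=L^{in}+M$ and $L\mathbf{1}_{\Omega}=L_{\Omega}\mathbf{1}$:
\[
\tilde{\mathbf{b}}-\mathbf{b}=(L^{in}_{\Omega\setminus T})^{\top}M_{\Omega}\mathbf{1}+M_{\Omega\setminus T}^{\top}L^{in}_{\Omega}\mathbf{1}+M_{\Omega\setminus T}^{\top}M_{\Omega}\mathbf{1}.
\]
Using $\|\mathbf{1}\|=\sqrt{|\Omega|}$, $\|M_{\Omega}\|\le\|M\|\le 2\epsilon_{\max}$, $\|L^{in}_{\Omega\setminus T}\|\le\sqrt2$, and $\|L^{in}_{\Omega}\|=O(1)$ almost surely (the same Gershgorin argument as in Lemma \ref{Cond} applied to $(L^{in}_{\Omega})^{\top}L^{in}_{\Omega}$ bounds its largest eigenvalue by an absolute constant), I get $\|\tilde{\mathbf{b}}-\mathbf{b}\|=O(\epsilon_{\max}\sqrt{|\Omega|})$. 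Lemma \ref{newlemma} gives $\|\mathbf{b}\|\ge\tfrac12\sqrt{|\Omega\setminus C_1|}$, and the size constraints $\lceil\tfrac{5n_1}{4}\rceil\le|\Omega|\le\lceil\tfrac{7n_1}{4}\rceil$ with $|C_1|=n_1$ force $|\Omega\setminus C_1|\ge\tfrac{n_1}{4}$ and $|\Omega|\le 2n_1$, so the ratio $|\Omega|/|\Omega\setminus C_1|$ is bounded and $\|\tilde{\mathbf{b}}-\mathbf{b}\|/\|\mathbf{b}\|=O(\epsilon_{\max})$.

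Plugging $cond(A)\le 4$, $\|\tilde A-A\|/\|A\|=O(\epsilon_{\max})$, and $\|\tilde{\mathbf{b}}-\mathbf{b}\|/\|\mathbf{b}\|=O(\epsilon_{\max})$ into Lemma \ref{NumAna} yields $\|\mathbf{x}^{\#}-\mathbf{x}^{*}\|/\|\mathbf{x}^{*}\|\le\frac{4}{1-O(\epsilon_{\max})}\,O(\epsilon_{\max})=O(\epsilon_{\max})$, as claimed. There is no deep obstacle; the care needed is entirely bookkeeping: the hidden constants in the $O(\cdot)$ estimates must be checked to be independent of $n_1$ (they are, since they come only from the fixed cluster-size ratios and the uniform bounds $\|M\|\le 2\epsilon_{\max}$ and $\|L^{in}_{\Omega}\|,\|L^{in}_{\Omega\setminus T}\|=O(1)$); the bounds on $cond(A)$, $\sigma_{\min}(A)$, $\|L^{in}_{\Omega}\|$ and $\|\mathbf{b}\|$ are only almost-sure (resting on the law-of-large-numbers step of Lemmas \ref{Cond} and \ref{newlemma}), so the conclusion is necessarily almost sure; and the argument needs $\epsilon_{\max}$ eventually small so that the factor $1-cond(A)\|\tilde A-A\|/\|A\|$ stays bounded away from zero, which is precisely where Assumption (II) enters.
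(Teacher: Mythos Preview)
Your proposal is correct and follows essentially the same approach as the paper: both pass to the normal equations, apply Lemma~\ref{NumAna} with $A=(L^{in}_{\Omega\setminus T})^{\top}L^{in}_{\Omega\setminus T}$, bound $\|\tilde A-A\|$ and $\|\tilde{\mathbf b}-\mathbf b\|$ via the expansion $L=L^{in}+M$ together with Lemma~\ref{Boundperturb}, and control $cond(A)$ and $\|\mathbf b\|$ via Lemmas~\ref{Cond} and~\ref{newlemma} plus the size constraints on $\Omega$. Your additional remarks on Weyl's inequality (well-definedness of $\mathbf{x}^{\#}$) and the role of Assumption~(II) in keeping the denominator bounded away from zero are welcome clarifications that the paper leaves implicit.
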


\begin{proof}
Let
$B=(L^{in}_{\Omega\setminus T})^{\top} L^{in}_{\Omega\setminus T}$, $\tilde{B}=(L_{\Omega\setminus T})^{\top} L_{\Omega\setminus T}$, $\mathbf{y}=L^{in}\mathbf{1}_{\Omega}$, $\tilde{\mathbf{y}}=L\mathbf{1}_{\Omega}$. We will apply Lemma \ref{Cond} with $B$, $\tilde{B}$, $\mathbf{y}$, $\tilde{\mathbf{y}}$.

First by Lemma~\ref{Boundperturb}, we have $\|M\|\leq 2\epsilon_{\max}$. Therefore
\begin{align*}
\|\tilde{B}-B\|&=\big\|(L_{\Omega\setminus T})^{\top} L_{\Omega\setminus T}-(L^{in}_{\Omega\setminus T})^{\top} L^{in}_{\Omega\setminus T}\big\| \\
&=
\big\|(L^{in}_{\Omega\setminus T})^{\top} M_{\Omega\setminus T}+M_{\Omega\setminus T}^{\top}L^{in}_{\Omega\setminus T}+M_{\Omega\setminus T}^{\top}M_{\Omega\setminus T}\big\| \\
&\leq \big\|(L^{in}_{\Omega\setminus T})^{\top} M_{\Omega\setminus T}\big\|+\big\|M_{\Omega\setminus T}^{\top}L^{in}_{\Omega\setminus T}\big\|+\big\|M_{\Omega\setminus T}^{\top}M_{\Omega\setminus T}\big\|  \\
&\leq \big(2\|L^{in}_{\Omega\setminus T}\|+\|M_{\Omega\setminus T}\|\big)\cdot\|M_{\Omega\setminus T}\| \\
&\leq \big(2\|L^{in}_{\Omega\setminus T}\|+\|M\|\big)\cdot \|M\| \\
&\leq 4\epsilon_{\max}\cdot\big(\|L^{in}_{\Omega\setminus T}\|+\epsilon_{\max} \big).
\end{align*} 
For each $i\in \Omega\setminus T$, we have $\|L_i\|\geq 1$, and $\sigma_{\max}((L^{in}_{\Omega\setminus T})^{\top} L^{in}_{\Omega\setminus T})=\|(L^{in}_{\Omega\setminus T})^{\top} L^{in}_{\Omega\setminus T}\|=\sigma_{\max}^2(L^{in}_{\Omega\setminus T})=\|L^{in}_{\Omega\setminus T}\|^2\geq \max_{i\in\Omega\setminus T}\|L_i\|^2 \geq 1$. Hence
\begin{align}
\frac{\big\|(L_{\Omega\setminus T})^{\top} L_{\Omega\setminus T}-(L^{in}_{\Omega\setminus T})^{\top} L^{in}_{\Omega\setminus T}\big\|}{\big\|(L^{in}_{\Omega\setminus T})^{\top} L^{in}_{\Omega\setminus T}\big\|}
\leq &\frac{\big(2\|L^{in}_{\Omega\setminus T}\|+\|M\|\big)\cdot \|M\|}{\|L^{in}_{\Omega\setminus T}\|^2}\cr
\leq & \frac{4\epsilon_{\max}}{\|L^{in}_{\Omega\setminus T}\|} +\frac{4\epsilon_{\max}^2}{\|L^{in}_{\Omega\setminus T}\|^2} \cr
\leq & 4(\epsilon_{\max}+\epsilon_{\max}^2).
\end{align}
We also have
\begin{align*}
\|\tilde{\mathbf{y}}-\mathbf{y}\|&=\|(L_{\Omega\setminus T})^{\top}(L\mathbf{1}_{\Omega})-(L^{in}_{\Omega\setminus T})^{\top}(L^{in}\mathbf{1}_{\Omega})\| \\
&=\|(L^{in}_{\Omega\setminus T}+M_{\Omega\setminus T})^{\top}(L_{\Omega}\mathbf{1}_\Omega)-(L^{in}_{\Omega\setminus T})^{\top}(L^{in}_{\Omega}\mathbf{1}_\Omega)\| \\
&= \|\big((L^{in}_{\Omega\setminus T})^{\top}M_{\Omega}+M_{\Omega\setminus T}^{\top}L^{in}_{\Omega}+M_{\Omega\setminus T}^{\top}M_{\Omega}\big)\cdot\mathbf{1}_\Omega\| \\
&\leq \sqrt{|\Omega|}\cdot\big(\|(L^{in}_{\Omega\setminus T})^{\top}M_{\Omega}\|+\|M_{\Omega\setminus T}^{\top}L^{in}_{\Omega}\|+\|M_{\Omega\setminus T}^{\top}M_{\Omega}\|\big) \\
&\leq \sqrt{|\Omega|}\cdot\big(2\|L^{in}_{\Omega}\|+\|M_{\Omega}\|\big)\cdot \|M_{\Omega}\| \\
&\leq 4\sqrt{|\Omega|}\cdot\big(\|L^{in}_{\Omega}\|+\epsilon_{\max}\big)\cdot \epsilon_{\max}.
\end{align*}
Next by Lemma \ref{newlemma}, 
$\|(L^{in}_{\Omega\setminus T})^{\top}L^{in}_{\Omega}\mathbf{1}_\Omega \|\geq \frac{\sqrt{|\Omega\setminus 
C_1|}}{2}$ almost surely. Therefore
\begin{align*}
\frac{\|(L_{\Omega\setminus T})^{\top}L\mathbf{1}_{\Omega}-(L^{in}_{\Omega\setminus T})^{\top}L^{in}\mathbf{1}_{\Omega}\|}{\|(L^{in}_{\Omega\setminus T})^{\top}L^{in}\mathbf{1}_{\Omega}\|}&\leq \frac{4\sqrt{|\Omega|}\cdot\big(\|L^{in}_{\Omega}\|+\epsilon_{\max}\big)\cdot \epsilon_{\max}}{\sqrt{|\Omega\setminus C_1|}/2} \\
&\leq 8\sqrt{5}\epsilon_{\max}\cdot\big(\|L^{in}_{\Omega}\|+\epsilon_{\max}\big) \\
&\leq 8\sqrt{5}\epsilon_{\max}\cdot\big(\sqrt{2}+\epsilon_{\max}\big) \\
&=8\sqrt{10}\epsilon_{\max}+8\sqrt{5}\epsilon_{\max}^2.
\end{align*}
The second inequality holds since $|\Omega|\geq \lceil \frac{5n_1}{4}\rceil$.
The third inequality holds since $\sigma_{\max}((L^{in}_{\Omega})^{\top}L^{in}_{\Omega})\leq 2$, which comes from the similar reasoning as in Lemma \ref{Cond} by using Gershgorin's circle theorem. Consequently, we have $\|L^{in}_{\Omega}\|\leq \sqrt{2}$.
Now putting Lemma \ref{Cond} and Lemma \ref{NumAna} together with $B=(L^{in}_{\Omega\setminus T})^{\top} L^{in}_{\Omega\setminus T}$, $\tilde{B}=(L_{\Omega\setminus T})^{\top} L_{\Omega\setminus T}$, $\mathbf{y}=L^{in}\mathbf{1}_{\Omega}$, $\tilde{\mathbf{y}}=L\mathbf{1}_{\Omega}$, we have
\begin{align*}
\frac{\|\mathbf{x}^\#-\mathbf{x}^*\|}{\|\mathbf{x}^*\|}&\leq\frac{cond\big((L^{in}_{\Omega\setminus T})^{\top}L^{in}_{\Omega\setminus T}\big)\cdot\big(4\epsilon_{\max}+4\epsilon_{\max}^2+8\sqrt{10}\epsilon_{\max}+8\sqrt{5}\epsilon_{\max}^2\big)}{1-cond\big((L^{in}_{\Omega\setminus T})^{\top} L^{in}_{\Omega\setminus T}\big)\cdot\big(4\epsilon_{\max}+4\epsilon_{\max}^2\big)} \\
&\leq \frac{16\big((1+2\sqrt{10})\epsilon_{\max}+(1+2\sqrt{5})\epsilon_{\max}^2\big)}{1-16\epsilon_{\max}(1+\epsilon_{\max})}=O(\epsilon_{\max}).
\end{align*}
\end{proof}
\noindent
Next we can estimate the size of the symmetric difference between output $C_1^{\#}$ and the true cluster $C_1$ relative to the size of $C_1$, the symmetric difference is defined as $C_1^{\#}\triangle C_1: = (C_1^{\#}\setminus C_1) \cup (C_1\setminus C_1^{\#})$.
Let us state another lemma before we establish the result.
\begin{lemma} \label{indicatornorm}
Let $T\subset [n]$, $\mathbf{v}\in\mathbb{R}^{n}$, and $W^{\#}=\{i:\mathbf{v}_{i}>R \}$. Suppose 
$\|\mathbf{1}_{T}-\mathbf{v}\|\leq D$, then $|T\triangle W^{\#}|\leq \frac{D^2}{\min\{R^2, (1-R)^2\}}$.
\end{lemma}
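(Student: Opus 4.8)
The plan is to bound $|T\triangle W^\#|$ by showing that every index lying in the symmetric difference is forced to contribute at least $\min\{R^2,(1-R)^2\}$ to the squared error $\|\mathbf{1}_T-\mathbf{v}\|^2$, so that the number of such indices cannot exceed $D^2/\min\{R^2,(1-R)^2\}$. First I would write $T\triangle W^\# = (T\setminus W^\#)\cup(W^\#\setminus T)$ as a disjoint union and treat the two pieces separately, since the indicator vector $\mathbf{1}_T$ takes the value $1$ on the first piece and $0$ on the second.

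For an index $i\in T\setminus W^\#$ we have $(\mathbf{1}_T)_i=1$ while $\mathbf{v}_i\le R$ by the definition of $W^\#$, hence $|(\mathbf{1}_T)_i-\mathbf{v}_i|\ge 1-R$ and this coordinate contributes at least $(1-R)^2$ to $\|\mathbf{1}_T-\mathbf{v}\|^2$. For an index $i\in W^\#\setminus T$ we have $(\mathbf{1}_T)_i=0$ while $\mathbf{v}_i> R$, hence $|(\mathbf{1}_T)_i-\mathbf{v}_i|> R$ and this coordinate contributes more than $R^2$. The remaining coordinates contribute a nonnegative amount, which we simply discard.

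Summing the two lower bounds over the indices in the symmetric difference gives
\[
D^2 \;\ge\; \|\mathbf{1}_T-\mathbf{v}\|^2 \;\ge\; |T\setminus W^\#|\,(1-R)^2 + |W^\#\setminus T|\,R^2 \;\ge\; |T\triangle W^\#|\cdot\min\{R^2,(1-R)^2\},
\]
and dividing through by $\min\{R^2,(1-R)^2\}$ yields the stated bound. There is essentially no real obstacle in this argument; the only point requiring a little care is the boundary behaviour at the threshold, namely whether $W^\#$ is defined with a strict or a non-strict inequality at $R$, but since only the weak final inequality is needed this distinction is immaterial, and one should also note that the statement is vacuous (or trivial) in the degenerate case $R\in\{0,1\}$ where $\min\{R^2,(1-R)^2\}=0$.
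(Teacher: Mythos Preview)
Your proof is correct and follows essentially the same approach as the paper's: both argue coordinate-wise that each index in $T\setminus W^{\#}$ contributes at least $(1-R)^2$ and each index in $W^{\#}\setminus T$ contributes at least $R^2$ to $\|\mathbf{1}_T-\mathbf{v}\|^2$, then bound below by $\min\{R^2,(1-R)^2\}\cdot|T\triangle W^{\#}|$. The paper merely packages the same computation in vector notation by splitting $\mathbf{v}=\mathbf{v}_{U^{\#}}+\mathbf{v}_{W^{\#}}$, but the substance is identical.
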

\begin{proof}
Let $U^{\#}=[n]\setminus W^{\#}$ and write $\mathbf{v}=\mathbf{v}_{U^{\#}}+\mathbf{v}_{W^{\#}}$, 
where $\mathbf{v}_{U^{\#}}$ and $\mathbf{v}_{W^{\#}}$ are the parts of $\mathbf{v}$ supported on 
$U^{\#}$ and $W^{\#}$. Then we can write 
\[\|\mathbf{1}_{T}-\mathbf{v}\|^2=\|\mathbf{1}_{T\cap W^{\#}}-(\mathbf{v}_{W^{\#}})_{T\cap W^{\#}}
\|^2+\|(\mathbf{v}_{W^{\#}})_{W^{\#}\setminus T}\|^2+\|\mathbf{1}_{T\setminus W^{\#}} 
-\mathbf{v}_{U^{\#}}\|^2.\]
Note that $\|(\mathbf{v}_{W^{\#}})_{W^{\#}\setminus T}\|^2\geq R^2\cdot|W^{\#}\setminus T|$ and 
$\|\mathbf{1}_{T\setminus W^{\#}}-\mathbf{v}_{U^{\#}}\|^2\geq (1-R)^2\cdot|T\setminus W^{\#}|$. We 
have
\begin{align*}
\|\mathbf{1}_{T}-\mathbf{v}\|^2 &\geq \|(\mathbf{v}_{W^{
\#}})_{W^{\#}\setminus T}\|^2+\|\mathbf{1}_{T\setminus W^{\#}}-\mathbf{v}_{U^{\#}}\|^2  \\
& \geq R^2\cdot|W^{\#}\setminus T|+(1-R)^2\cdot|T\setminus W^{\#}| \\
& \geq \min\{R^2, (1-R)^2\}\cdot(|W^{\#}\setminus T|+|T\setminus W^{\#}|) \\
& = \min\{R^2, (1-R)^2\}\cdot|T\triangle W^\#|.
\end{align*}
Therefore $|T\triangle W^{\#}|\leq \frac{\|\mathbf{1}_{T}-\mathbf{v}\|^2}{\min\{R^2, (1-R)^2\}}\leq\frac{D^2}{\min\{R^2, (1-R)^2\}}$ as desired.
\end{proof}


\begin{theorem} \label{Diffset}
Under the same assumptions as Theorem \ref{IndAna}, we have 
\[\frac{|C^{\#}_1\triangle C_1|}{|C_1|}\leq O(\epsilon^2_{\max}).\]
In other words, the error rate of successfully recovering $C_1$ is at most a constant multiple of $\epsilon^2_{\max}$.
\end{theorem}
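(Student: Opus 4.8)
The plan is to combine Theorem~\ref{IndAna} --- which says the least squares solution $\mathbf{x}^\#$ is within relative error $O(\epsilon_{\max})$ of the clean indicator $\mathbf{x}^*=\mathbf{1}_{C_1^\mathsf{c}}$ --- with the rounding estimate of Lemma~\ref{indicatornorm}, and then to transfer the resulting symmetric-difference bound on index sets of $\mathbb{R}^{|\Omega|-|T|}$ back to a bound on $C_1^\#\triangle C_1$. No new analytic input is needed beyond the earlier lemmas; the work is bookkeeping.

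First, identify the coordinates of $\mathbf{x}^\#\in\mathbb{R}^{|\Omega|-|T|}$ with the vertices of $\Omega\setminus T$, and let $S:=(\Omega\setminus T)\setminus C_1$, so that $\mathbf{x}^*=\mathbf{1}_S$ (here we use $T\subseteq C_1$ from the assumptions of Lemma~\ref{Cond}, whence $S=\Omega\setminus C_1$). Then $\|\mathbf{x}^*\|^2=|\Omega\setminus C_1|$, and Theorem~\ref{IndAna} gives $\|\mathbf{x}^\#-\mathbf{x}^*\|^2=O(\epsilon_{\max}^2)\,|\Omega\setminus C_1|$ almost surely for large $n_1$.

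Second, apply Lemma~\ref{indicatornorm} with ambient dimension $|\Omega|-|T|$, distinguished set $S$, vector $\mathbf{x}^\#$, and $W^\#$ the set produced in Step~4 of Algorithm~\ref{alg:LSQ}; since $R$ is a fixed constant with $0.1\le R\le 0.9$, the quantity $\min\{R^2,(1-R)^2\}$ is bounded below by an absolute constant, so $|S\triangle W^\#|\le O(\epsilon_{\max}^2)\,|\Omega\setminus C_1|$. Now comes the set-theoretic step: using $T\subseteq C_1\subseteq\Omega$, $W^\#\subseteq\Omega\setminus T$, and $C_1^\#=\Omega\setminus W^\#$, one checks $C_1^\#\setminus C_1=(\Omega\setminus C_1)\setminus W^\#=S\setminus W^\#$ and $C_1\setminus C_1^\#=C_1\cap W^\#=W^\#\setminus S$, hence $C_1^\#\triangle C_1=S\triangle W^\#$ and therefore $|C_1^\#\triangle C_1|\le O(\epsilon_{\max}^2)\,|\Omega\setminus C_1|$.

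Finally, by the size constraint $\lceil\frac{5n_1}{4}\rceil\le|\Omega|\le\lceil\frac{7n_1}{4}\rceil$ in Lemma~\ref{Cond} and $|C_1|=n_1$, we have $|\Omega\setminus C_1|=|\Omega|-n_1\le\lceil\frac{7n_1}{4}\rceil-n_1=O(n_1)=O(|C_1|)$, so dividing through yields $\frac{|C_1^\#\triangle C_1|}{|C_1|}\le O(\epsilon_{\max}^2)$, as claimed. The only point requiring care is the third-paragraph bookkeeping --- correctly matching the coordinates of $\mathbf{x}^\#$ to $\Omega\setminus T$ and verifying the symmetric-difference identity --- together with confirming that the implied constant absorbs the fixed $R$ and the bounded ratio $|\Omega\setminus C_1|/|C_1|$.
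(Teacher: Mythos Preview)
Your proposal is correct and follows essentially the same route as the paper's own proof: apply Theorem~\ref{IndAna} to bound $\|\mathbf{x}^\#-\mathbf{1}_{\Omega\setminus C_1}\|$, feed this into Lemma~\ref{indicatornorm} to control $|W^\#\triangle(\Omega\setminus C_1)|$, and then use $C_1^\#=\Omega\setminus W^\#$ to transfer this to $|C_1^\#\triangle C_1|$. Your version is more careful than the paper's in that you explicitly spell out the ambient coordinate identification with $\Omega\setminus T$, the role of the fixed constant $R$, and the set-theoretic identity $C_1^\#\triangle C_1 = S\triangle W^\#$, all of which the paper glosses over in a single sentence.
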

\begin{proof}
	From Theorem \ref{IndAna}, we have $\|\mathbf{x}^{\#}-\mathbf{x}^{*}\|=\|\mathbf{x}^{\#}-\mathbf{1}_{\Omega\setminus C_1}\|\leq O(\epsilon_{\max})\cdot\|\mathbf{x}^{*}\|\leq O(\epsilon_{\max}\sqrt{n_1})$. By Lemma \ref{indicatornorm}, we get $|W^{\#}\triangle(\Omega\setminus C_1)|\leq O(\epsilon_{\max}^2 n_1)$. Since $C_1^{\#}=\Omega\setminus W^{\#}$, it then follows $|C_1^{\#}\triangle C_1|\leq O(\epsilon^2_{\max} n_1)$, hence $\frac{|C^{\#}_1\triangle C_1|}{|C_1|}=O(\epsilon_{\max}^2)$ as desired.
\end{proof}


\subsection{Random Walk Threshold}
In order to apply Algorithm \ref{alg:LSQ}, we need a ``nice" superset which contains $C_1$. The task for this subsection is to find such a superset $\Omega$ from the given seeds $\Gamma$. We will apply a simple diffusion based random walk algorithm on $G$ to find such $\Omega$. This leads to Algorithm \ref{alg2}, which is described in \cite{LaiMckenzie2020} as well. However, the difference of the random walk threshold algorithm between this paper and the one in \cite{LaiMckenzie2020} is that the threshold parameter $\delta$ here is heuristically chosen to be larger than the corresponding threshold parameter in 
\cite{LaiMckenzie2020}. This is another advantage of our method as it will increase the chances of having $C_1$ entirely contained in $\Omega$. Such a choice is made based on the natural differences of our approaches. It is worthwhile to point out that there are also other sophisticated algorithms such as the ones described in \cite{Andersen2007}, \cite{Kloster2014} and \cite{Wang2017} which can achieve the same goal. We avoid using these methods here as our purpose is just to implement a fast way of obtaining a set $\Omega\supset C_1$. 

\begin{algorithm}[htbp]
\caption{\textbf{Random Walk Threshold}}
\label{alg2}
\begin{algorithmic}
\Require 
 Adjacency matrix $A$, a random walk threshold parameter $\delta\in (0,1)$, a set of seed vertices $\Gamma\subset C_1$, estimated size $\hat{n}_1\approx |C_1|$, and depth of random walk $t\in\mathbb{Z}^{+}$.
\begin{enumerate}
\item Compute $P=AD^{-1}$ and $\mathbf{v}^{(0)}=D\mathbf{1}_{\Gamma}$.
\item Compute $\mathbf{v}^{(t)}=P^t\mathbf{v}^{(0)}$.
\item Define $\Omega={\mathcal{L}}_{(1+\delta)\hat{n}_1}(\mathbf{v}^{(t)})$.  
\end{enumerate}
\Ensure $\Omega=\Omega\cup\Gamma$. 
\end{algorithmic}
\end{algorithm}
\noindent
The thresholding operator $\mathcal{L}_s(\mathord{\cdot})$ is defined as 
\[\mathcal{L}_s(\mathbf{v}):=\{i\in[n]: \text{$v_i$ among $s$ largest entries in $\mathbf{v}$}\}.\]
The motivation of Algorithm \ref{alg2} is the following intuitive observation. Suppose we are given seed vertices $\Gamma\subset C_1$, then by starting from $\Gamma$, since the edges within each cluster are more dense than those between different clusters, the probability of staying within $C_1$ will be much higher than entering other clusters $C_i$, for $i\neq 1$, in a short amount of depth. Therefore, by performing a random walk up to a certain depth $t$, e.g., $t=3$, we will have a well approximated set $\Omega$ such that $C_1$ is almost surely contained in $\Omega$. Let us make this more precisely in Theorem \ref{ThmRWT}.

\begin{theorem} \label{ThmRWT}
Assume $|\Gamma|=O(1)$ and $t=O(1)$ in Algorithm \ref{alg2}, the probability 
$\mathbb{P}\big(C_1\subset\Omega\big)\geq\mathbb{P}\big(\sum_{j\in 
C_1}\mathbf{v}^{(t)}_{j}=\|\mathbf{v}^{(t)}\|_1\big)\geq 1-O(\epsilon_{\max})$. In other words, the 
probability that the $t$-steps random walk with seed vertices $\Gamma$ being not in $C_1$ is at most a 
constant multiple of $\epsilon_{\max}$.
\end{theorem}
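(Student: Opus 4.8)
The plan is to reduce the statement to a one-step ``leakage'' estimate for the diffusion $\mathbf{v}^{(s)}=P^{s}\mathbf{v}^{(0)}$ and then iterate $t=O(1)$ times. First I would record that $P=AD^{-1}$ is entrywise nonnegative with every column summing to $1$, and $\mathbf{v}^{(0)}=D\mathbf{1}_{\Gamma}\ge 0$; hence each $\mathbf{v}^{(s)}$ is nonnegative and $\|\mathbf{v}^{(s)}\|_{1}=\mathbf{1}^{\top}P^{s}\mathbf{v}^{(0)}=\mathbf{1}^{\top}\mathbf{v}^{(0)}=\sum_{i\in\Gamma}d_{i}=:\mathrm{vol}(\Gamma)$ for all $s$. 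Thus $\mathbf{v}^{(s)}/\mathrm{vol}(\Gamma)$ is precisely the law at time $s$ of the degree-weighted random walk $(X_{s})$ with $\mathbb{P}(X_{0}=i)=d_{i}\mathbf{1}_{i\in\Gamma}/\mathrm{vol}(\Gamma)$ and transition $\mathbb{P}(X_{s+1}=i\mid X_{s}=k)=A_{ik}/d_{k}$, and $\sum_{j\in C_{1}}\mathbf{v}^{(t)}_{j}/\|\mathbf{v}^{(t)}\|_{1}=\mathbb{P}(X_{t}\in C_{1})$. So it suffices to bound $\mathbb{P}(X_{t}\notin C_{1})$ by $O(\epsilon_{\max})$, and then to argue that when no mass leaves $C_{1}$ the thresholded set $\Omega$ contains all of $C_{1}$.

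For the escape estimate the key computation is that for $k\in C_{1}$ the one-step probability of leaving $C_{1}$ equals $\sum_{i\notin C_{1}}A_{ik}/d_{k}=d^{out}_{k}/d_{k}=\epsilon_{k}\le\epsilon_{\max}$; equivalently $\sum_{j\in C_{1}}P_{jk}=d^{in}_{k}/d_{k}=1-\epsilon_{k}\ge 1-\epsilon_{\max}$ for $k\in C_{1}$, while $\sum_{j\in C_{1}}P_{jk}\ge 0$ for $k\notin C_{1}$. Setting $m_{s}:=\sum_{j\in C_{1}}\mathbf{v}^{(s)}_{j}$ and using $\mathbf{v}^{(s)}\ge 0$, $m_{s+1}=\sum_{k}\mathbf{v}^{(s)}_{k}\sum_{j\in C_{1}}P_{jk}\ge(1-\epsilon_{\max})\sum_{k\in C_{1}}\mathbf{v}^{(s)}_{k}=(1-\epsilon_{\max})m_{s}$. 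Since $\Gamma\subset C_{1}$ gives $m_{0}=\mathrm{vol}(\Gamma)=\|\mathbf{v}^{(t)}\|_{1}$, induction yields $m_{t}\ge(1-\epsilon_{\max})^{t}\|\mathbf{v}^{(t)}\|_{1}\ge(1-t\epsilon_{\max})\|\mathbf{v}^{(t)}\|_{1}$ by Bernoulli's inequality, and $t=O(1)$ converts this into $\mathbb{P}(X_{t}\notin C_{1})=1-m_{t}/\|\mathbf{v}^{(t)}\|_{1}\le 1-(1-\epsilon_{\max})^{t}=O(\epsilon_{\max})$, which is the second asserted inequality.

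For the first inequality I would note that on the event $m_{t}=\|\mathbf{v}^{(t)}\|_{1}$ every vertex outside $C_{1}$ has $\mathbf{v}^{(t)}$-coordinate $0$; invoking Assumption $(I)$, so that $G^{in}$ restricted to $C_{1}$ is, almost surely for large $n_{1}$, connected with $O(1)$ diameter, every vertex of $C_{1}$ receives strictly positive mass after $t=O(1)$ steps, whence the $\lceil(1+\delta)\hat{n}_{1}\rceil\ge n_{1}=|C_{1}|$ largest coordinates of $\mathbf{v}^{(t)}$ comprise all of $C_{1}$; combined with the final line $\Omega\leftarrow\Omega\cup\Gamma$ of Algorithm~\ref{alg2} this gives $C_{1}\subset\Omega$, hence $\mathbb{P}(C_{1}\subset\Omega)\ge\mathbb{P}(m_{t}=\|\mathbf{v}^{(t)}\|_{1})$. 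The main obstacle is this passage from ``a $(1-O(\epsilon_{\max}))$ fraction of the mass stays in $C_{1}$'' to the combinatorial conclusion $C_{1}\subset\Omega$: for a fixed realization of $G$ the ratio $m_{t}/\|\mathbf{v}^{(t)}\|_{1}$ is deterministic, so the middle ``probability'' has to be read as the per-trajectory escape probability $\mathbb{P}(X_{t}\notin C_{1})$ (equivalently one passes to the asymptotic random-graph model, where with probability $1-O(\epsilon_{\max})$ the degree-weighted walk from $\Gamma$ never exits $C_{1}$), and one also has to verify the harmless size condition $\lceil(1+\delta)\hat n_{1}\rceil\ge|C_{1}|$, which holds since $\hat n_{1}\approx|C_{1}|$ and $\delta>0$. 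Everything else is the elementary induction above.
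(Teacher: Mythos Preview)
Your argument is correct and follows essentially the same route as the paper: bound the one-step escape from $C_{1}$ by $\epsilon_{\max}$, iterate $t=O(1)$ times to get $(1-\epsilon_{\max})^{t}\ge 1-t\epsilon_{\max}$, and conclude. The only cosmetic difference is that the paper first treats a single seed $|\Gamma|=1$ and then extends to $|\Gamma|>1$ by multiplying the per-seed survival probabilities, whereas you work directly with the aggregate mass $m_{s}=\sum_{j\in C_{1}}\mathbf{v}^{(s)}_{j}$ and the recursion $m_{s+1}\ge(1-\epsilon_{\max})m_{s}$; your formulation is arguably cleaner and in fact you go further than the paper by explicitly justifying the first inequality $\mathbb{P}(C_{1}\subset\Omega)\ge\mathbb{P}(m_{t}=\|\mathbf{v}^{(t)}\|_{1})$ and by flagging the interpretational issue that, for a fixed graph, $m_{t}/\|\mathbf{v}^{(t)}\|_{1}$ is deterministic and the ``probability'' must be read as the trajectory-level escape probability of the underlying random walk.
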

\begin{proof}
Let us first consider the case $|\Gamma|=1$. Suppose $\Gamma=\{s\}$. Then we have 
$\mathbb{P}\big(\sum_{j\in  C_1}\mathbf{v}^{(0)}_j=\|\mathbf{v}^{(0)}\|_1\big)=\mathbb{P}(\mathbf{v}^{(0)}_s
=\|\mathbf{v}^{(0)}\|_1)=1$. It is also easy to see that $\mathbb{P}\big(\sum_{j\in  
C_1}\mathbf{v}^{(1)}_j=\|\mathbf{v}^{(1)}\|_1\big)=d_i^{in}/d_i=1-\epsilon_i\geq 1-\epsilon_{\max}$.  
For $t\geq 2$, we have $\mathbb{P}\big(\sum_{j\in  
C_1}\mathbf{v}^{(t)}_j=\|\mathbf{v}^{(t)}\|_1\big)\geq(1-\epsilon_{\max})\cdot\mathbb{P}
\big(\sum_{j\in C_1}\mathbf{v}^{(t-1)}_j=\|\mathbf{v}^{(t-1)}\|_1\big)$. So by assuming 
$\mathbb{P}\big(\sum_{j\in C_1}\mathbf{v}^{(t-1)}_j=\|\mathbf{v}^{(t-1)}\|_1\big)\geq  
(1-\epsilon_{\max})^{t-1}\geq 1-(t-1)\epsilon_{\max}$, we have $\mathbb{P}\big(\sum_{j\in  
C_1}\mathbf{v}^{(t)}_j=\|\mathbf{v}^{(t)}\|_1\big)\geq (1-\epsilon_{\max})^{t}\geq  
1-t\epsilon_{\max}=1-O(\epsilon_{\max})$.

Suppose now $|\Gamma|>1$, we can apply the above argument to each individual vertex in $\Gamma$, where 
the random walk starting from each vertex can be considered independently, therefore we have 
$\mathbb{P}\big(\sum_{j\in C_1}\mathbf{v}^{(t)}_{j}=\|\mathbf{v}^{(t)}\|_1\big)\geq 
(1-t\epsilon_{\max})^{|\Gamma|}\geq 1-t\epsilon_{\max}|\Gamma|=1-O(\epsilon_{\max})$.
\end{proof}

\begin{remark}
	It is worthwhile to note that we do not want $t$ to be too large, one reason is that Theorem \ref{ThmRWT} tells us the probability of staying within the target cluster $C_1$ decreases as $t$ increases. An alternative interpretation is that we can treat our graph $G$, suppose connected, as a time homogeneous finite state Markov chain with evenly distributed transition probability determined by the vertex degree between adjacent vertices. Since $G$ is connected, it is certainly irreducible and aperiodic. By the fundamental theorem of Markov chains, the limiting probability of finally being at each vertex will be the same, regardless of what the seed set $\Gamma$ is. 
	
	Meanwhile, we do not want $t$ to be too small as well, otherwise the random walk will not be able to explore all the reachable vertices. 
	There is also a trade-off between the size of $\Gamma$ and the random walk depth $t$, where a smaller size of $\Gamma$ usually induces a larger $t$ in order to fully explore the target cluster.
\end{remark}

\subsection{Local Cluster Extraction}
Let us now combine the previous two subroutines into Algorithm \ref{alg3}. In practice, we may want to vary the number of iterations $MaxIter$ based on the number of examples in the data set in order to achieve a better performance. For the purpose theoretical analysis, let us fix $MaxIter=1$.

\begin{algorithm}[h]
\caption{\textbf{Least Squares Clustering (LSC)}}
\label{alg3}
\begin{algorithmic}
\Require 
Adjacency matrix $A$, a random walk threshold parameter $\delta\in (0,1)$, a set of seed vertices $\Gamma\subset C_1$, estimated size $\hat{n}_1\approx |C_1|$, depth of random walk $t\in\mathbb{Z}^{+}$, least squares parameter $\gamma\in (0,0.8)$, and rejection parameter $R\in[0,1)$. 
\begin{enumerate}
\item{}  \textbf{for} $i=1,\cdots, MaxIter$
\item{} \quad $\Omega \longleftarrow$ \textbf{Random Walk Threshold ($A$, $\Gamma$, $\hat{n}_1$, $\epsilon$, $t$)}.
\item{} \quad $\Gamma \longleftarrow$ \textbf{Least Squares Cluster Pursuit ($A$, $\Omega$, $R$, $\gamma$)}.
\item{} \textbf{end}
\item{} Let $C_1^{\#}=\Gamma$.
\end{enumerate}
\Ensure $C_1^{\#}$.
\end{algorithmic}
\end{algorithm}

\begin{remark}
The hyperparameter $MaxIter$ in the algorithm is usually choosen based on the size of initial seed vertices $\Gamma$ relative to $n$, we do not have a formal way of choosing the best $MaxIter$ rather than choose it heuristically. In practice, we believe $MaxIter\leq 3$ will do a very good job most of the time.
\end{remark}
The analysis in previous two subsections gives that the difference between true cluster $C_1$ and the 
estimated $C_1^{\#}$ is relative small compared to the size of $C_1$, this can be written more formally using the asymptotic notation.

\begin{theorem}
Suppose $\epsilon_{\max}=o(1)$ and $MaxIter=1$, then under the assumptions of Theorem \ref{Diffset} and \ref{ThmRWT}, we have $\mathbb{P}\Big(\frac{|C^{\#}_1\triangle C_1|}{|C_1|}\leq o(1)\Big)=1-o(1)$.
\end{theorem}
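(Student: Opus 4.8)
\noindent
The plan is to obtain the statement as a composition of the two guarantees already established: Theorem~\ref{ThmRWT} for the Random Walk Threshold subroutine and Theorem~\ref{Diffset} for the Least Squares Cluster Pursuit subroutine. With $MaxIter=1$, Algorithm~\ref{alg3} is literally the concatenation of these two steps, so once the output $\Omega$ of the first step is a ``nice'' superset of $C_1$ the second step applies verbatim. Concretely, I would introduce the event
\[
\mathcal{E} := \{\, C_1 \subset \Omega \,\},
\]
argue conditionally on $\mathcal{E}$, and then remove the conditioning at the end using the lower bound on $\mathbb{P}(\mathcal{E})$ from Theorem~\ref{ThmRWT}.

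First I would check that on $\mathcal{E}$ the hypotheses of Theorem~\ref{Diffset} (equivalently of Lemma~\ref{Cond} and Theorem~\ref{IndAna}) are actually met. Since $C_1$ is the smallest cluster, $\hat n_1 \approx n_1 = \min_i n_i$, and $|\Gamma|=O(1)$, the set $\Omega = \mathcal{L}_{(1+\delta)\hat n_1}(\mathbf{v}^{(t)}) \cup \Gamma$ has $|\Omega| = (1+\delta)\hat n_1 (1+o(1))$, so one may choose $\delta$ and $\gamma$ jointly so that $\lceil 5n_1/4\rceil \le |\Omega| \le \lceil 7n_1/4\rceil$ and $\lceil 3n_1/4\rceil \le |T| = \gamma|\Omega| < n_1$. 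On $\mathcal{E}$ one then automatically has $\Omega \not\supset C_i$ for every $i\ge 2$: the $C_i$ are disjoint from $C_1$, hence $|\Omega\cap C_i| \le |\Omega\setminus C_1| = |\Omega|-n_1 \le \lceil 3n_1/4\rceil < n_1 \le n_i$. Thus $\Omega$ satisfies exactly the structural assumption used throughout Section~\ref{sectionalgorithms}, and Theorem~\ref{Diffset} yields, on the intersection of $\mathcal{E}$ with the high-probability event $\mathcal{G}$ coming from the law-of-large-numbers spectral estimates in Lemma~\ref{Cond} and Lemma~\ref{newlemma},
\[
\frac{|C_1^{\#}\triangle C_1|}{|C_1|} \le O(\epsilon_{\max}^2).
\]
Since $\epsilon_{\max}=o(1)$ by hypothesis, the right-hand side is $o(1)$.

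It then remains to collect probabilities: by Theorem~\ref{ThmRWT}, $\mathbb{P}(\mathcal{E}) \ge 1 - O(\epsilon_{\max}) = 1-o(1)$, while $\mathbb{P}(\mathcal{G}) = 1-o(1)$, so a union bound gives $\mathbb{P}(\mathcal{E}\cap\mathcal{G}) \ge 1-o(1)$, and on this event the displayed bound holds, which is precisely $\mathbb{P}\big(\tfrac{|C_1^{\#}\triangle C_1|}{|C_1|}\le o(1)\big) = 1-o(1)$. The only point requiring care --- and the main, if mild, obstacle --- is the bookkeeping of the two distinct sources of randomness: the graph model (under which the estimates of Lemmas~\ref{Cond} and~\ref{newlemma} hold asymptotically almost surely) and the random walk together with the seed placement (under which $\mathcal{E}$ holds with probability $1-O(\epsilon_{\max})$); one must ensure these are combined correctly and that the various $O(\cdot)$ and $o(\cdot)$ terms are all expressed through the single deterministic parameter $\epsilon_{\max}\to 0$. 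The restriction $MaxIter=1$ is what keeps this clean, since no compounding of error across iterations has to be tracked.
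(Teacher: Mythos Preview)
Your proposal is correct and follows essentially the same approach as the paper: invoke Theorem~\ref{ThmRWT} to get $\mathbb{P}(C_1\subset\Omega)=1-O(\epsilon_{\max})=1-o(1)$, then apply Theorem~\ref{Diffset} conditionally to obtain the $O(\epsilon_{\max}^2)=o(1)$ error rate, and combine. If anything, you are more careful than the paper in explicitly verifying the size hypotheses on $\Omega$ and $T$, checking that $\Omega\not\supset C_i$ for $i\ge 2$, and separating the graph-model randomness (event $\mathcal{G}$) from the random-walk randomness (event $\mathcal{E}$) before the union bound.
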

\begin{proof}
By Theorem \ref{ThmRWT}, we know that the probability of $\Omega\supset C_1$ after performing Algorithm \ref{alg2} is $1-O(\epsilon_{\max})=1-o(1)$. By Theorem \ref{Diffset}, the error rate is at 
most a constant multiple of $\epsilon^2_{\max}$ after performing Algorithm \ref{alg:LSQ}. Putting them together, we have 
$\mathbb{P}\Big(\frac{|C^{\#}_1\triangle C_1|}{|C_1|}\leq o(1)\Big)=1-o(1)$. 
\end{proof}

\subsection{From Local to Global}
We can make one step further by applying Algorithm \ref{alg3} iteratively on the entire graph to extract all the underlying clusters. 
That is, we remove $C_i^{\#}$ each time after the Algorithm \ref{alg3} finds it, and update the graph $G$ by removing the subgraph spanned 
by vertices $C_i^{\#}$ successively. This leads to Algorithm \ref{alg4}. We will not 
analyze further the theoretical guarantees of the iterative version the algorithm, but rather provide with numerical examples in 
the later section to show its effectiveness and efficiency.
\begin{algorithm}[ht]
\caption{\textbf{Iterative Least Squares Clustering (ILSC)}}
\label{alg4}
\begin{algorithmic}
\Require 
Adjacency matrix $A$, random walk threshold parameter $\delta\in (0,1)$, least squares parameter $\gamma\in (0,0.8)$, rejection 
parameter $R\in[0,1)$, depth of random walk $t\in\mathbb{Z}^{+}$. Seed vertices for each cluster $\Gamma_i\subset C_i$, estimated 
size $\hat{n}_i\approx |C_i|$ for $i=1,\cdots k$. 
\begin{enumerate}
\item{} \textbf{for} $i= 1,\cdots, k$
\item{} \quad Let $C_i^{\#}$ be the output of \textbf{Algorithm 3}.
\item{} \quad  Let $G^{(i)}$ be the subgraph spanned by $C_i^{\#}$.
\item{} \quad Updates $G\leftarrow G\setminus G^{(i)}$.
\item{} \textbf{end}
\end{enumerate}
\Ensure $C_1^{\#}, \cdots, C_k^{\#}$. 
\end{algorithmic}
\end{algorithm}

\begin{remark}
It is worth noting that Algorithm \ref{alg4} extracts one cluster at a time, which is different from most of other global unsupervised clustering algorithms. In practice, those global clustering methods could have impractically high run time \cite{Mossel2018} or tricky to implement \cite{Abbe2015}. In contrast, our method requires much lower computational time and can be implemented easily. In addition, the "one cluster at a time" feature of our method provides more flexibility for problems under certain circumstances.  
\end{remark}

\section{Computational Complexity}
\label{sectioncomplexity}
In this section, let us discuss the run time of the algorithms introduced previously.
\begin{theorem}
Algorithm \ref{alg2} requires $O(nd_{\max}t+n\log(n))$ operations, where $t$ is the depth of the random walk. 
\end{theorem}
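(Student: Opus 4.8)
The plan is to bound the cost of each of the three steps of Algorithm \ref{alg2} separately and then sum. Throughout I would work with sparse data structures: $A$ is stored as a list of its $\mathrm{nnz}(A)=O(nd_{\max})$ nonzero entries (each vertex being incident to $O(d_{\max})$ edges), $D$ and $D^{-1}$ are stored as the length-$n$ vectors $(d_1,\dots,d_n)$ and $(1/d_1,\dots,1/d_n)$, and each iterate $\mathbf{v}^{(\ell)}$ is stored as a dense length-$n$ array so that individual coordinate look-ups are $O(1)$. For Step 1, forming $\mathbf{v}^{(0)}=D\mathbf{1}_{\Gamma}$ only rescales the $|\Gamma|$ nonzero coordinates of $\mathbf{1}_\Gamma$, which costs $O(|\Gamma|)=O(n)$; assembling $P=AD^{-1}$ divides column $j$ of $A$ by $d_j$, one multiplication per nonzero entry of $A$, hence $O(\mathrm{nnz}(A))=O(nd_{\max})$. (One need not even assemble $P$, as explained below.)

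For Step 2, I would compute $\mathbf{v}^{(t)}=P^{t}\mathbf{v}^{(0)}$ by the recursion $\mathbf{v}^{(\ell)}=P\mathbf{v}^{(\ell-1)}$ for $\ell=1,\dots,t$, i.e.\ by $t$ successive sparse matrix--vector products, never forming the (possibly dense) power $P^{t}$. A single product $P\mathbf{w}$ performs one multiply--add per nonzero entry of $P$, so it costs $O(\mathrm{nnz}(P))=O(\mathrm{nnz}(A))=O(nd_{\max})$ regardless of how dense $\mathbf{w}$ has become; performing $t$ of them costs $O(nd_{\max}t)$. If $P$ was not pre-assembled, each product is realized as a sparse multiplication by $A$ followed by the $O(n)$ diagonal scaling by $D^{-1}$, which does not affect the bound.

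For Step 3 and the final update, extracting $\Omega=\mathcal{L}_{(1+\delta)\hat n_1}(\mathbf{v}^{(t)})$ amounts to selecting the $s:=\lceil(1+\delta)\hat n_1\rceil\le n$ largest entries of a length-$n$ vector; sorting $\mathbf{v}^{(t)}$ and reading off the top $s$ indices does this in $O(n\log n)$ operations (a heap- or quickselect-based partial selection would even give $O(n+s\log s)$, but $O(n\log n)$ suffices and matches the claimed bound). The final line $\Omega\leftarrow\Omega\cup\Gamma$ costs $O(|\Omega|+|\Gamma|)=O(n)$. Adding the contributions of the three steps yields $O(nd_{\max})+O(nd_{\max}t)+O(n\log n)+O(n)=O(nd_{\max}t+n\log n)$.

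There is no genuine obstacle here; this is a routine operation count. The only two points that require care are (i) insisting on sparse arithmetic with densely stored iterates, so that each matrix--vector product costs $O(\mathrm{nnz}(A))$ rather than $O(n^2)$ and the bound does not degrade as the support of $\mathbf{v}^{(\ell)}$ spreads; and (ii) correctly charging the thresholding operator $\mathcal{L}_s(\cdot)$, whose top-$s$ selection (a sort or partial selection) is precisely the source of the additive $n\log n$ term.
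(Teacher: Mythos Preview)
Your proposal is correct and follows essentially the same approach as the paper: charge each of the $t$ sparse matrix--vector products at $O(nd_{\max})$ and attribute the additive $O(n\log n)$ term to the sorting in the thresholding step. Your write-up is simply more detailed than the paper's short proof, which omits the explicit accounting for Step~1 and the final union.
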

\begin{proof}
Notice that if $A, D, P$ are stored as sparse matrices, then for each $t$ in the second step of Algorithm \ref{alg2},
it requires $O(nd_{\max})$, where $d_{\max}$ is the maximal degrees among all the vertices. 
Therefore the algorithm 
requires $O(nd_{\max}t+n\log(n))$, where the $O(n\log(n))$ part comes 
from the third step of sorting. In practice, the random walk depth $t$ is $O(1)$ with respect to the graph size $n$, therefore we have $O(nd_{\max}+n\log(n))$.
\end{proof}
\begin{theorem}
Algorithm \ref{alg:LSQ} requires $O(nd_{\max}+n\log(n))$ operations.
\end{theorem}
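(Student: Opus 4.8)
The plan is to walk through the four numbered steps of Algorithm \ref{alg:LSQ} and bound the cost of each, under the standing convention (already used in the complexity analysis of Algorithm \ref{alg2}) that $A$, and hence $D$ and $L$, are stored in a sparse format, so that a matrix-vector product with $L$, or with any column-submatrix of $L$, costs $O(nd_{\max})$ operations, where $d_{\max}$ is the maximal degree.

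First I would dispatch the cheap steps. In Step 1, forming $L = I - D^{-1}A$ amounts to scaling each row $i$ of $A$ by $1/d_i$, which touches every nonzero entry of $A$ exactly once, at cost $O(nd_{\max})$; computing $\mathbf{y} = L\mathbf{1}_\Omega$ is a single sparse matrix-vector multiplication, also $O(nd_{\max})$. In Step 2, the vector $|L_\Omega|^{\top}|\mathbf{y}|$ is again a sparse matrix-vector product, $O(nd_{\max})$, and selecting the indices of its $\gamma\cdot|\Omega|$ smallest entries costs $O(|\Omega|\log|\Omega|) \le O(n\log n)$ by sorting (or $O(n)$ with a linear-time selection routine). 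Step 4 and the final assignment $C_1^{\#} = \Omega\setminus W^{\#}$ are single scans over at most $n$ entries, hence $O(n)$.

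The crux is Step 3, the iterative solution of the least squares problem (\ref{inalglsqremoveT}). I would argue that a Krylov-type solver (conjugate gradient on the normal equations, or LSQR as used in practice) performs one product with $L_{\Omega\setminus T}$ and one with $L_{\Omega\setminus T}^{\top}$ per iteration, each costing $O(nd_{\max})$ since $L_{\Omega\setminus T}$ is a sparse submatrix of $L$; thus the whole step costs $O(nd_{\max})$ provided the iteration count is $O(1)$ in $n$. To get the iteration count I would invoke the standard bound that CG/LSQR reaches a fixed target accuracy in $O\!\big(\sqrt{\kappa}\,\log(1/\mathrm{tol})\big)$ iterations, where $\kappa$ is the condition number of the normal-equations matrix $(L_{\Omega\setminus T})^{\top}L_{\Omega\setminus T}$. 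By Lemma \ref{Cond} the condition number of $(L^{in}_{\Omega\setminus T})^{\top}L^{in}_{\Omega\setminus T}$ is at most $4$ almost surely, and combining this with the perturbation estimate $\|M\| \le 2\epsilon_{\max}$ of Lemma \ref{Boundperturb} (and the hypothesis $\epsilon_{\max} = o(1)$, as in Theorem \ref{IndAna}) shows $\kappa = O(1)$ for the actual matrix $(L_{\Omega\setminus T})^{\top}L_{\Omega\setminus T}$ as well. Hence the iteration count is $O(1)$, Step 3 costs $O(nd_{\max})$, and summing the four steps gives $O(nd_{\max} + n\log n)$.

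The main obstacle is making the iteration-count argument precise: one has to cite the convergence rate of the chosen iterative solver and then transfer the condition-number bound of Lemma \ref{Cond} from the idealized matrix $L^{in}_{\Omega\setminus T}$ to the perturbed matrix $L_{\Omega\setminus T}$ that the algorithm actually uses, which is exactly what the $O(\epsilon_{\max})$ perturbation bounds already established allow. Everything else is routine accounting of sparse matrix-vector products and one sort.
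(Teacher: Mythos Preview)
Your proposal is correct and follows essentially the same approach as the paper: step-by-step accounting with $O(nd_{\max})$ for each sparse matrix-vector product, $O(n\log n)$ for the sort in Step 2, and $O(1)$ iterations of an iterative least squares solver justified by the bounded condition number from Lemma \ref{Cond}. If anything, you are more careful than the paper, which simply asserts that ``the matrices are associated with well behaved condition numbers'' without explicitly transferring the bound from $L^{in}_{\Omega\setminus T}$ to $L_{\Omega\setminus T}$ via Lemma \ref{Boundperturb} as you do.
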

\begin{proof}
For Algorithm \ref{alg:LSQ}, its first step requires $O(nd_{\max})$, second step requires 
$O(nd_{\max}+n\log(n))$, where the $O(nd_{\max})$ part comes from matrix vector multiplication, and $O(n\log(n))$ part comes 
from sorting. 
For its third step, to avoid solving the normal equation exactly for large scale problems, we recommend using an iterative mehod, for example conjugate gradient descent (we use MATLAB's \emph{lsqr} operation in our implementation). As we have shown the matrices are associated with well behaved condition numbers, it requires only a constant number of iterations to get a well approximated least 
squares solution to problem (\ref{inalglsqremoveT}). Since the cost for each iteration in conjugate 
gradient descent equals to a few operations of matrix vector multiplication, which is $O(nd_{\max})$, the total cost for 
Algorithm \ref{alg:LSQ} is $O(nd_{\max}+n\log(n))$. 
\end{proof}

As a consequence, the total run time for Algorithm \ref{alg3} is $O(nd_{\max}+n\log(n))$. if the number of clusters $k=O(1)$, then Algorithm \ref{alg4} also runs in $O(nd_{\max}+n\log(n))$.

\begin{remark}
The computational scheme of our methods follow the similar framework as CP+RWT in \cite{LaiMckenzie2020}. However, one of the differences between these two approaches is that we apply \emph{lsqr} to solve the least squares problem (\ref{inalglsqremoveT}), but CP+RWT applies $O(\log n)$ iterations of subspace pursuit algorithm to solve (\ref{inalglsqremoveT}), and each its subspace pursuit is implemented with \emph{lsqr} as a subroutine. So essentially, our proposed method is $O(\log n)$ times cheaper than CP+RWT. We can also see this difference by comparing the run times for our numerical experiments in the next section.
\end{remark}

\section{Numerical Experiments}
\label{sectionnumerical}
In this section, we evaluate the performance of our algorithms on synthetic symmetric stochastic block model (SSBM), network data on political 
blogs \cite{AG05}, AT\&T Database of Faces \footnote{\url{https://git-disl.github.io/GTDLBench/datasets/att_face_dataset/}}, Extended Yale Face Database B (YaleB) \footnote{\url{http://vision.ucsd.edu/~leekc/ExtYaleDatabase/ExtYaleB.html}}, and MNIST data \footnote{\url{http://yann.lecun.com/exdb/mnist/}}.

For single cluster extraction tasks, we will consider the diffusion based methods plus a possible refinement procedure CP+RWT~\cite{LaiMckenzie2020}, HK-Grow \cite{Kloster2014}, PPR \cite{Andersen2007}, and LBSA \cite{Shi2019} as our baseline methods. For multi-cluster extraction tasks, we will consider ICP+RWT~\cite{LaiMckenzie2020}, LapRF and TVRF \cite{Yin2018}, Multi-class MBO Auction Dynamics \cite{Jacobs2018}, AtlasRBF \cite{Pitelis2014}, Pseudo-label \cite{Lee2013}, DGN \cite{Kingma2014} and Ladder Networks \cite{Rasmus2015} as the baseline methods. The standard spectral clustering algorithm \cite{Ng2002} is also being applied in some of the experiments.
For the implementation of Algorithms \ref{alg3} and \ref{alg4}, we use MATLAB's $lsqr$ function as our iterative least squares solver to solve equation (\ref{inalglsqremoveT}). We tune the rejection parameters $R$ for all algorithms appropriately to make the output $C^{\#}_i$ of each algorithm approximately the same size for comparison purpose. 
For the evaluation metrics of our experiments, we will consider Jaccard index for symmetric stochastic block model, F1 score for human faces data, and classification accuracy for political blogs data and MNIST data.
More implementation details are summarized as a supplementary material.

\subsection{Stochastic Block Model Data}
We first test Algorithm~\ref{alg3} on $SSBM(n,k,p,q)$ with different choices of parameters. The paramenter $n$ indicates the total number of vertices, $k$ indicates the number of clusters, $p$ is the probability of having an edge between any two  vertices within each cluster, and $q$ is the probability of having an edge between any two vertices from different clusters.  Figure~\ref{SSBMfigure} left panel demonstrates such a synthetic random graph model with three underlying clusters.  Figure~\ref{SSBMfigure} right panel illustrates an adjacency matrix of a random graph generated from symmetric stochastic block model with three underlying clusters. In our experiments, we fix $k=3$ and choose $n=600, 1200, 1800, 2400, 3000$ respectively. The connection probability between edges are chosen as $p=\frac{8\log n}{n}$ and $q=\frac{\log n}{n}$. By choosing the parameters carefully, we obtain the Jaccard index and logarithm of running time of each method shown in Figure~\ref{SSBMresults}. We also run the experiments on stochastic block model for non-symmetric case and obtained similar gaps in accuracy and run time. For the implementation of symmetric stochastic block model, we use three vertices with given label as our seeds, and we focus on only recovering the target cluster, say $C_1$. The experiments are performed with 500 repetitions.
\begin{figure}[htpb]
    \centering
    \begin{tabular}{cc}
    \includegraphics[width=0.45\textwidth]{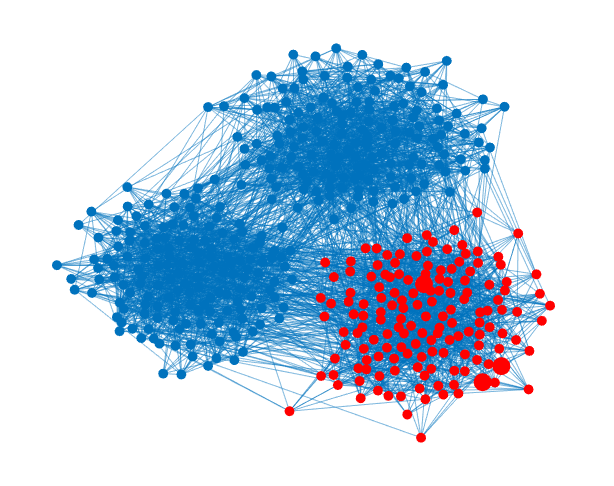} & 
	\includegraphics[width=0.45\textwidth]{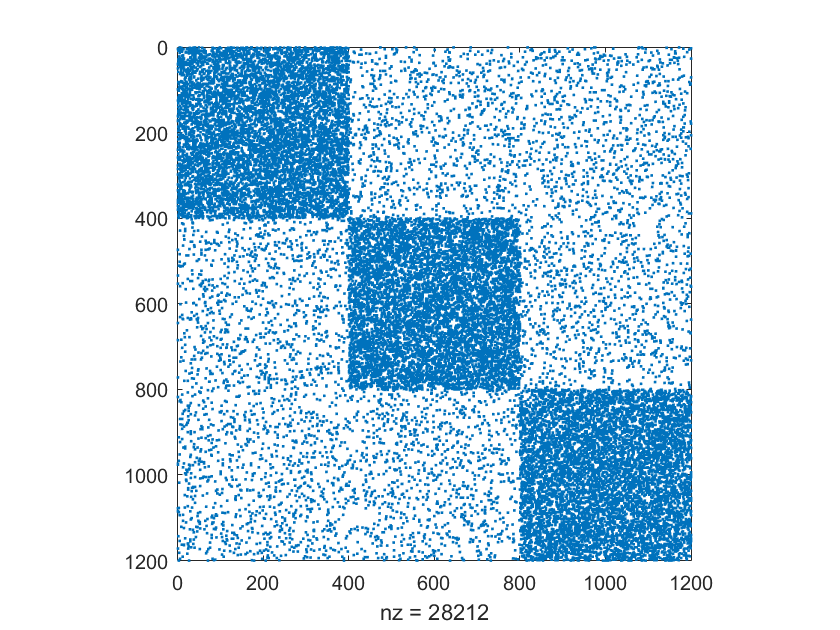}
    \end{tabular}
    \caption{\emph{Left}: A Random SSBM Graph with Three Underlying Clusters. \emph{Right}: Adjacency Matrix of A Random Graph Generated From SSBM.}
	\label{SSBMfigure}
\end{figure}  

\begin{figure}[htpb]
    \centering
    \begin{tabular}{cc}
    \includegraphics[width=0.45\textwidth]{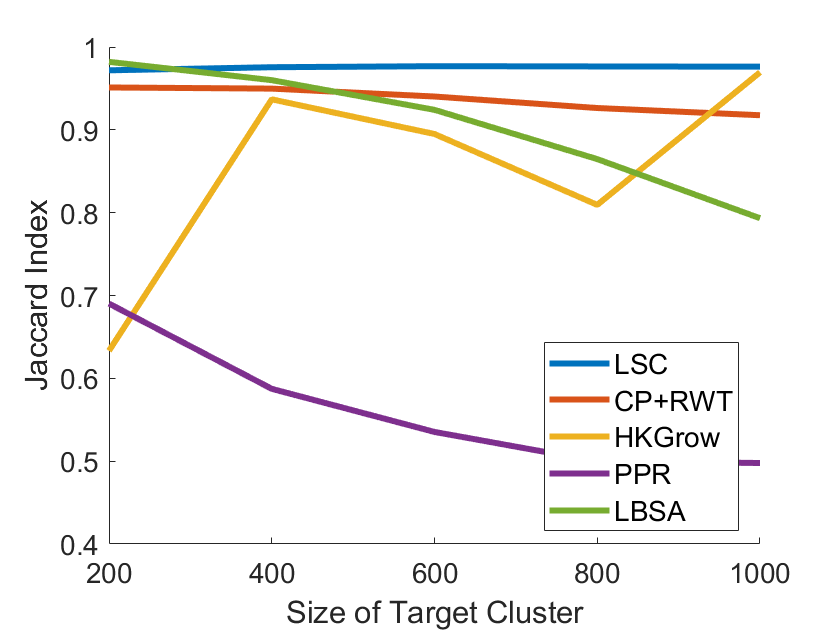} & 
	\includegraphics[width=0.45\textwidth]{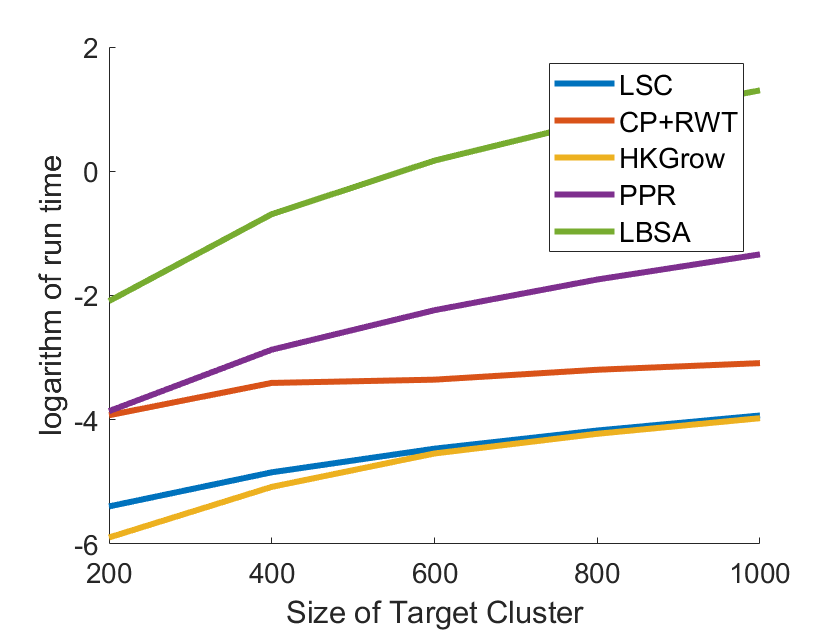}
    \end{tabular}
    \caption{\emph{Left}: Average Jaccard Index. \emph{Right}: Logarithm of the Average Run Time.}
	\label{SSBMresults}
\end{figure}  


\subsection{Political Blogs Network Data}
Next we test Algorithm \ref{alg3} on the data from "The political blogosphere and the 2004 US Election"\cite{AG05}, which contains a list of political blogs that were classified as liberal or conservative, and links between the 
blogs. See Figure~\ref{blog} as an illustration for the community structure (Figure source \cite{AG05}).
\begin{figure}[h]
	\centering
	\includegraphics[width=0.5\textwidth]{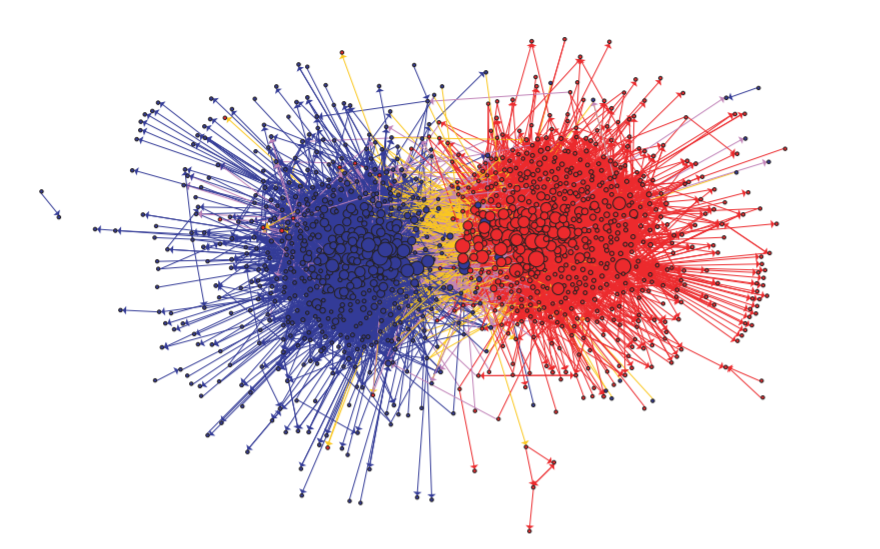}
    \caption{\small{Community structure of political blogs. The colors reflect political orientation, red for conservative, and blue for liberal. Orange links go from liberal to conservative, and purple ones from conservative to liberal. The size of each blog reflects the number of other blogs that link to it} \cite{AG05}. }
    \label{blog}
\end{figure}

As explained by Abbe and Sandon in \cite{Abbe2015}, their simplified algorithm gave a reasonably good classification 37 times out of 40 trials. Each of these trials classified all but 56 to 67 of the 1222 vertices in the graph  
main component correctly. According to \cite{Abbe2015}, the state-of-the-art described in \cite{CG15} before the work in \cite{Abbe2015} gives a lowest value around 60, while using regularized spectral algorithms such as the one in \cite{QR13} obtain about 80 errors. 
In our experiments, given three labeled seeds, the Algorithm \ref{alg3} succeeds 35 trials out of a total of 40 trials. Among these 35 trials, the average number of misclassified node in the graph main component is 55, which is slightly favorable than the state-of-the-art result. We also tested CP+RWT on this dataset and found the results were not very satisfactory.

\subsection{AT\&T Database of Faces}
The AT\&T Database of Faces 
contains gray scale images for $40$ different people of pixel size $92\times 112$. Images of each 
person are taken under $10$ different conditions, by varying the three perspectives of faces, lighting 
conditions, and facial expressions. 

\begin{figure}[htpb]	
	\centering
	\begin{tabular}{cc}
		\includegraphics[width=0.40\textwidth]{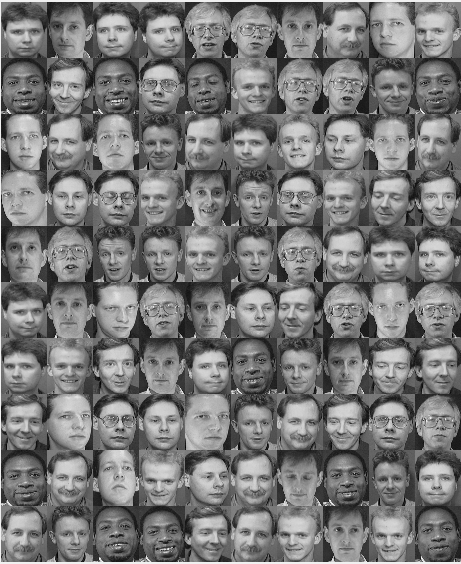} &
		\includegraphics[width=0.40\textwidth]{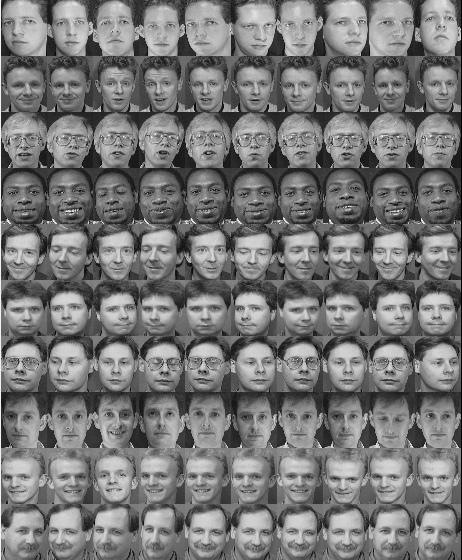}
	\end{tabular}
	\caption{\emph{Left}: Randomly Permuted AT\&T Faces. \emph{Right}: Desired Recovery of all Faces into Correct Clusters.}
	\label{FigureATT}
\end{figure}
We use part of this data set by randomly selecting 10 people such that each individual has 10 images. We randomly permute the 
images as shown in the left side of Figure~\ref{FigureATT}, and compute its adjacency matrix $A$ based on the preprocessing method discussed in the appendix \ref{preprocessing}.  Then we iteratively apply Algorithm~\ref{alg3} and try to recover all the 10 images 
belong to the same individual. The desired permutation of these individual images after iteratively performing Algorithm~\ref{alg3} are shown in the right side of Figure~\ref{FigureATT}. Some more details of the implementation regarding to the hyperparameters tuning are summarized in the appendxi \ref{parametertuning}.
The performance of our 
algorithm compared with CP+RWT and Spectral Clustering (SC) are summarized in Table~\ref{TableATT} under 500 repetitions. Note that spectral 
clustering method is unsupervised, hence its accuracy does not affected by the percentage of labeled data.
\begin{table}[htpb]
	\caption{Average F1 Scores of Recovering All Clusters for AT\&T Data.}
	\centering
	\label{TableATT}
	\begin{tabular}{c|ccc}
		\hline
		Labeled Data \%  & $10$ & $20$ & $30$  \\
		\hline
		LSC & $96.5\%$ & $97.5\%$ & $98.2\%$
		\\
		CP+RWT~\cite{LaiMckenzie2020} & $92.2\%$ & $95.7\%$ & $97.1\%$
		\\
		SC~\cite{Ng2002} & 95.8\% & 95.8\% & 95.8\% \\
		\hline
	\end{tabular}
\end{table}   
 
\subsection{Extended Yale Face Database B (YaleB)}
The YaleB dataset contains 16128 gray scale images of 28 human subjects under 9 poses and 64 illumination conditions. We use part of this data set by randomly selecting 20 images from each person after the preprocessing in appendix \ref{preprocessing}. 
The images are randomly permuted and we aim to recover all the clusters by iteratively performing Algorithm~\ref{alg3}. Figure~\ref{FigureYaleB}
shows this dataset with randomly permuted images on the left side and the desired clustering results on the right side. Figure~\ref{FigureYaleB_small} enlarges a small part inside the pictures from Figure~\ref{FigureYaleB} with the red boxes. The performance of our algorithm compared with CP+RWT and Spectral Clustering (SC) are summarized in Table~\ref{TableYaleB} under 500 repetitions.

\begin{table}[htpb] 
	\caption{Average F1 Scores of Recovering All Clusters for YaleB Data.}
	\label{TableYaleB}
	\centering
	\begin{tabular}{c|ccc}
		\hline
		Labeled Data \%  & $5$ & $10$ & $20$   \\
		\hline
		LSC & 92.1\% & $96.0\%$ & $96.1\%$  	\\
		CP+RWT~\cite{LaiMckenzie2020} & 89.2\% & $93.7\%$ & $93.9\%$ 	\\
		SC~\cite{Ng2002} & 93.8\% & 93.8\% & 93.8\% \\
		\hline
	\end{tabular}
\end{table}    

\begin{figure}[htpb]	
	\centering
	\begin{tabular}{cc}
	    \includegraphics[width=0.45\textwidth]{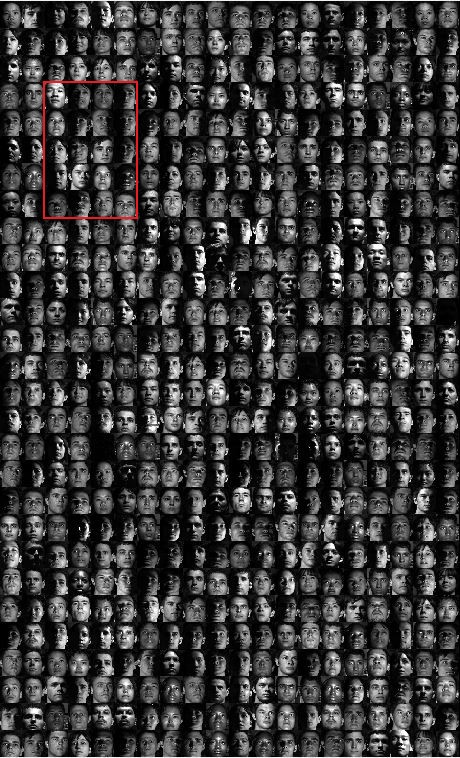} &
	    \includegraphics[width=0.45\textwidth]{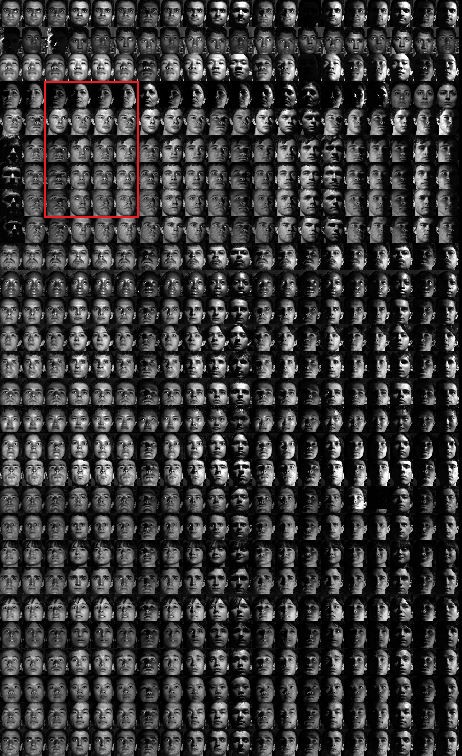}
	\end{tabular}
	\caption{\emph{Left}: Randomly Permuted YaleB Faces. \emph{Right}: Desired Recovery of all Faces into Correct Clusters.}
	\label{FigureYaleB}
\end{figure}

\begin{figure}[htpb]	
	\centering
	\begin{tabular}{cc}
	    \includegraphics[width=0.255\textwidth]{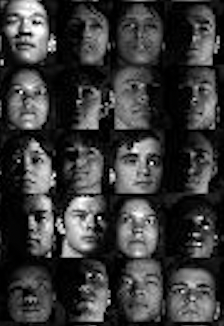} &
	    \includegraphics[width=0.26\textwidth]{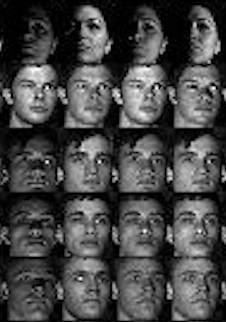}
	\end{tabular}
	\caption{Enlarged YaleB Human Faces.}
	\label{FigureYaleB_small}
\end{figure}

\subsection{MNIST Data}
We also test Algorithm~\ref{alg4} on the MNIST data, which is a famous machine learning benchmark dataset in classification that consists of $70000$ grayscale images of the handwritten digits $0$-$9$ of size $28\times 28$ with approximately $7000$ images of each digit. 
We used a certain percentage of vertices with given labels within each of the ten clusters as our seed vertices, the performance ILSC and ICP+RWT are summarized in Table \ref{oneperent} under 100 repetitions.

\begin{table}[htpb] 
	\caption{Average Accuracy of Recovering All Clusters for MNIST Data (Time is measured in seconds).}
	\label{oneperent}
	\centering
	\begin{tabular}{c|cc|cc}
		\hline
		Labeled Data \% & ILSC & Run Time & ICP+RWT~\cite{LaiMckenzie2020} & Run Time \\
		\hline
		0.5 & $97.30\%$ & 15.5 & $96.41\%$ & 18.1 \\
		1 & $97.73\%$ & 15.3 & $97.32\%$ & 19.1 \\
		1.5 & $98.03\%$  & 15.4 & $97.44\%$ & 19.8 \\
		2 & $98.17\%$ & 15.5 & $97.52\%$ & 21.4 \\
		2.5 & $98.27\%$ & 15.4 & $97.50\%$ & 22.1 \\
		\hline
	\end{tabular}
\end{table}   


We also compare ILSC with several other state-of-the-art semi-supervised methods on MNIST. As we can see in Table~\ref{Ladder}, ILSC outperforms the other algorithms except for the Ladder Networks which uses more information of labels and involved in a deep neural network architecture that requires training on GPUs for several hours.
\begin{table}[htpb] 
	\caption{Accuracy of ILSC and other Semi-supervised Algorithms on MNIST.}
	\label{Ladder}
	\centering
	\begin{tabular}{ccc}
		\hline
		Methods & \# Labeled Data & Accuracy  \\
		\hline
		LapRF \cite{Yin2018} & $600$ & $95.6\%$ 
		\\
		TVRF \cite{Yin2018} & $600$ & $96.8\%$ 
		\\
		ICP+RWT \cite{LaiMckenzie2020} & $700$ & $97.3\%$ 
		\\
		Multi-class MBO with Auction Dynamics \cite{Jacobs2018}  & $700$ & $97.4\%$ 
		\\
		ILSC (this paper) & $700$ & $97.7\%$ 
		\\
		AtlasRBF \cite{Pitelis2014} & $1000$ & $96.4\%$ 
		\\
		Pseudo-label \cite{Lee2013} & $1000$ & $96.6\%$ 
		\\
		DGN \cite{Kingma2014} & $1000$ & $97.6\%$ 
		\\
		ILSC (this paper) & $1000$ & $98.0\%$ 
		\\
		Ladder Networks \cite{Rasmus2015} & $1000$ & $99.2\%$
		\\
		\hline
	\end{tabular}
\end{table}

\vspace{0.5cm}
\noindent
\textbf{Declarations} 

\paragraph{\textbf{\emph{\small Funding}}} \small{The first author is supported by the Simons Foundation collaboration grant \#864439.}

\paragraph{\textbf{\emph{\small Competing Interests}}} \small{The authors have disclosed any competing interests.}

\paragraph{\textbf{\emph{\small Data Availability}}} \small{A sample demo program for reproducing Fig. \ref{FigureATT} in this paper can be found at \url{ https://github.com/zzzzms/LeastSquareClustering}. All other demonstration codes or data are available upon request.}

\appendix
\section{Hyperparameters for Numerical Experiments}
\label{parametertuning}
For each cluster to be recovered, we sampled the seed vertices $\Gamma_i$ uniformly from $C_i$ during all implementations. We fix the random walk depth with $t=3$, use random walk threshold parameter $\delta=0.8$ for political blogs network and $\delta=0.6$ for all the other experiments. We vary the rejection parameters $R\in (0,1)$ for each specific experiments based on the estimated sizes of clusters. In the case of no knowledge of estimated sizes of clusters nor the number of clusters are given, we may have to refer to the spectra of graph Laplacian and use the large gap between two consecutive spectrum to estimate the number of clusters, and then use the average size to estimate the size of each cluster.  

We fix the least squares threshold parameter with $\gamma=0.2$ for all the experiments, which is totally heuristic. However, we have experimented that the performance of algorithms will not be affected too much by varying $\gamma\in [0.15,0.4]$. The hyperparameter $MaxIter$ is chosen according to the size of initial seed vertices relative to the total number of vertices in the cluster. For purely comparison purpose, we keep $MaxIter=1$ for MNIST data. By experimenting on different choices of $MaxIter$, we find that the best performance for AT\&T data occurs at $MaxIter=2$ for $10\%$ seeds and $MaxIter=1$ for $20\%$ and $30\%$ seeds. For YaleB data, the best performance occurs at $MaxIter=2$ for $5\%$, $10\%$, and $20\%$ seeds. All the numerical experiments are implemented in MATLAB and can be run on a local personal machine, for the authenticity of our results, we put a sample demo code at \emph{https://github.com/zzzzms/LeastSquareClustering} for verification purpose.

\section{Image Data Preprocessing} \label{preprocessing}
For YaleB human faces data, we have performed some data preprocessing techinuqes to avoid the poor quality images. Specifically, we abandoned the pictures which are too dark, and we cropped each image into size of $54\times 46$ to reduce the effect of background noise. For the remaining qualified pictures, we randomly selected 20 images for each person. 

All the image data in MNIST, AT\&T, YaleB needs to be firstly constructed into an auxiliary graph before the implementation. Let $\mathbf{x}_i\in\mathbb{R}^n$ be the vectorization of an image from the original data set, we define the following affinity matrix of the $K$-NN auxiliary graph based on Gaussian kernel according to \cite{Jacobs2018} and \cite{Zelnik2004},. 
\[ A_{ij} = \begin{cases} 
e^{-\|\mathbf{x}_i-\mathbf{x}_j\|^2/\sigma_i\sigma_j} & \text{if} \quad \mathbf{x}_j\in NN(\mathbf{x}_i,K) \\
0 & \text{otherwise} 
\end{cases}
\]
The notation $NN(\mathbf{x}_i,K)$ indicates the set of $K$-nearest neighbours of $\mathbf{x}_i$, and $\sigma_i:=\|\mathbf{x}_i-\mathbf{x}^{(r)}_i\|$ where $\mathbf{x}^{(r)}_i$ is the $r$-th closest point of $\mathbf{x}_i$. Note that the above $A_{ij}$ is not necessary symmetric, so we consider $\tilde{A}_{ij}=A^T A$ for symmetrization. Alternatively, one may also want to consider $\tilde{A}=\max\{A_{ij}, A_{ji}\}$ or $\tilde{A}=(A_{ij}+A_{ji})/2$. We use $\tilde{A}$ as the input adjacency matrix for our algorithms. 

We fix the local scaling parameters $K=15$, $r=10$ for the MNIST data, $K=8$, $r=5$ for the YaleB data, and $K=5$, $r=3$ for the AT\&T data during implementation.

\end{document}